\title{Auto-Encoding Sequential Monte Carlo}
\author{Tuan~Anh Le$^\dag$, Maximilian Igl$^\dag$, Tom Rainforth$^\ddag$, Tom Jin$^{\dag,\S}$, Frank Wood$^\dag$ \\
$^\dag$Department of Engineering Science, University of Oxford \\
$^\ddag$Department of Statistics, University of Oxford \\
$^\S$Department of Statistics, University of Warwick \\
\texttt{\{tuananh,igl,jin,fwood\}@robots.ox.ac.uk, ~ rainforth@stats.ox.ac.uk}
}
\newacronym{VAE}{vae}{variational auto-encoder}
\newacronym{AESMC}{aesmc}{auto-encoding sequential Monte Carlo}
\newacronym{IS}{is}{importance sampling}
\newacronym{IWAE}{iwae}{importance weighted auto-encoder}
\newacronym{SMC}{smc}{sequential Monte Carlo}
\newacronym{SSM}{ssm}{state-space model}
\newacronym{SGA}{sga}{stochastic gradient ascent}
\newacronym{SGD}{sgd}{stochastic gradient descent}
\newacronym{ELBO}{elbo}{evidence lower bound}
\newacronym{KL}{kl}{Kullback-Leibler}
\newacronym{LSTM}{lstm}{long short-term memory}
\newacronym{AD}{ad}{automatic differentiation}
\newacronym{SPSA}{spsa}{simultaneous perturbation stochastic approximation}
\newacronym{CG-SPSA}{cg-spsa}{computational graph SPSA}
\newacronym{MML}{mml}{maximum marginal likelihood}
\newacronym{REINFORCE}{reinforce}{REINFORCE}
\newacronym{ADAM}{adam}{ADAM}
\newacronym{GRU}{gru}{gated recurrent unit}
\newacronym{MLP}{mlp}{multilayer perceptron}
\newacronym{MAP}{map}{maximum a-posteriori}
\newacronym{KDE}{kde}{kernel density estimation}
\newacronym{EM}{em}{expectation maximization}
\newacronym{MC}{mc}{Monte Carlo}
\newacronym{ALT}{alt}{alternating \textsc{elbo}s}
\newacronym{SNR}{snr}{signal-to-noise ratio}
\newacronym{VRNN}{vrnn}{Variational Recurrent Neural Network}
\newacronym{LGSSM}{lgssm}{linear Gaussian state space model}
\newacronym[firstplural=recurrent neural networks, plural=RNNs]{RNN}{rnn}{recurrent neural network}
\newcommand{\given}{\lvert}
\DeclareMathOperator{\E}{\mathbb{E}}
\DeclareMathOperator{\Var}{\mathrm{Var}}
\DeclareMathOperator{\ELBO}{\acrshort{ELBO}}
\DeclareMathOperator{\SNR}{\acrshort{SNR}}
\DeclareMathOperator*{\argmax}{arg\,\!max}
\newcommand{\KL}[2]{\acrshort{KL}\left(#1 \middle| \middle| #2\right)}
\newtheorem{claim}{Claim}
\newtheorem{definition}{Definition}
\newtheorem{proposition}{Proposition}
\begin{document}

\maketitle

\begin{abstract}
    We build on \gls{AESMC}:\footnote{This work builds upon an earlier preprint~\citep{le2017auto} along with the independent, simultaneously developed, closely related, work
    of \citet{maddison2017filtering} and \citet{naesseth2017variational}.} a method for model and proposal learning based on maximizing the lower bound to the log marginal likelihood in a broad family of structured probabilistic models.
    Our approach relies on the efficiency of \gls{SMC} for performing inference in structured probabilistic models and the flexibility of deep neural networks to model complex conditional probability distributions.
    We develop additional theoretical insights and experiment with a new training procedure which can improve both model and proposal learning.
    We demonstrate that our approach provides a fast, easy-to-implement and scalable means for simultaneous model learning and proposal adaptation in deep generative models.
\end{abstract}

\section{Introduction}

We build upon \gls{AESMC}~\citep{le2017auto}, a method for model learning that itself builds on \glspl{VAE}~\citep{kingma2014auto,rezende2014stochastic} and \glspl{IWAE}~\citep{burda2016importance}.
\Gls{AESMC} is similarly based on maximizing a lower bound to the log marginal likelihood,
but uses \gls{SMC}~\citep{doucet2009tutorial} as the underlying marginal likelihood
estimator instead of \gls{IS}.  For a very wide array of models, particularly
those with sequential structure, \gls{SMC} forms a substantially more powerful inference
method
than \gls{IS}, typically returning lower variance estimates for
the marginal likelihood.  Consequently, by using \gls{SMC} for its marginal
likelihood estimation, \gls{AESMC} often leads to improvements in model learning compared
with \glspl{VAE} and \glspl{IWAE}.  We provide experiments on 
structured time-series data that show that \gls{AESMC} based learning was able to learn useful representations of the latent space for both reconstruction and prediction more effectively
than the \gls{IWAE} counterpart.

\Gls{AESMC} was introduced in an earlier preprint~\citep{le2017auto} concurrently with the closely related methods of \citet{maddison2017filtering,naesseth2017variational}.  In this work we take these ideas
further by providing new theoretical insights for the resulting \glspl{ELBO}, extending these
to explore the relative efficiency of different approaches to proposal learning, and using
our results to develop a new and improved training procedure. 
In particular, we introduce a method for expressing the gap between an \gls{ELBO} and the log marginal likelihood as a \gls{KL} divergence between two distributions on an extended sampling space.
Doing so allows us to investigate the behavior of this family of algorithms when the objective is maximized perfectly, which occurs only if the \gls{KL} divergence becomes zero.
In the \gls{IWAE} case, this implies that the proposal distributions are equal to the posterior distributions under the learned model.
In the \gls{AESMC} case, it has implications for both the proposal distributions and
the intermediate set of targets that are learned.
We demonstrate that, somewhat counter-intuitively, using lower variance estimates for
the marginal likelihood can actually be harmful to proposal learning.  
Using these insights, we experiment with an adaptation to the
\gls{AESMC} algorithm, which we call \emph{alternating} \glspl{ELBO}, that uses
different lower bounds for updating the model parameters and proposal parameters.
We observe that this adaptation can, in some cases, improve model learning and proposal adaptation.

\section{Background}

\subsection{State-Space Models}

\Glspl{SSM} are probabilistic models over a set of latent variables $x_{1:T}$ and observed variables $y_{1:T}$.
Given parameters $\theta$, a \gls{SSM} is characterized by an initial density $\mu_{\theta}(x_1)$, a series of transition densities $f_{t, \theta}(x_t \given x_{1:t - 1})$, and a series of emission densities $g_{t, \theta}(y_t \given x_{1:t})$ with the joint density being $p_{\theta}(x_{1:T}, y_{1:T}) = \mu_{\theta}(x_1) \prod_{t = 2}^T f_{t,\theta}(x_t \given x_{1:t - 1}) \prod_{t = 1}^T g_{t,\theta}(y_t \given x_{1:t})$.

We are usually interested in approximating the posterior $p_{\theta}(x_{1:T} \given y_{1:T})$ or the expectation of some test function $\varphi$ under this posterior $I(\varphi) := \int \varphi(x_{1:T}) p_{\theta}(x_{1:T} \given y_{1:T}) \,\mathrm dx_{1:T}$.
We refer to these two tasks as inference.
Inference in models which are non-linear, non-discrete, and non-Gaussian is difficult and one must resort to approximate methods, for which \gls{SMC} has been shown to be one of the most powerful approaches \citep{doucet2009tutorial}.

We will consider model learning as a problem of maximizing the marginal likelihood $p_{\theta}(y_{1:T}) = \int p_{\theta}(x_{1:T}, y_{1:T}) \,\mathrm dx_{1:T}$ in the family of models parameterized by $\theta$.

\subsection{Sequential Monte Carlo}

\gls{SMC} performs approximate inference on a sequence of target distributions $(\pi_t(x_{1:t}))_{t = 1}^T$.
In the context of \glspl{SSM}, the target distributions are often taken to be $(p_{\theta}(x_{1:t} \given y_{1:t}))_{t = 1}^T$.
Given a parameter $\phi$ and proposal distributions $q_{1, \phi}(x_1 \given y_1)$ and $(q_{t, \phi}(x_t \given y_{1:t}, x_{1:t - 1}))_{t = 2}^T$ from which we can sample and whose densities we can evaluate, \gls{SMC} is described in Algorithm~\ref{alg:background/smc}.

Using the set of weighted particles $(\tilde x_{1:T}^k, w_T^k)_{k = 1}^K$ at the last time step, we can approximate the posterior as $\sum_{k = 1}^K \bar{w}_T^k \delta_{\tilde{x}_{1:T}^k}(x_{1:T})$ and the integral $I_{\varphi}$ as $\sum_{k = 1}^K \bar{w}_T^k \varphi(\tilde{x}_{1:T}^k)$, where $\bar{w}_T^k := w_T^k / \sum_j w_T^j$ is the normalized weight and $\delta_z$ is a Dirac measure centered on $z$.
Furthermore, one can obtain an unbiased estimator of the marginal likelihood $p_{\theta}(y_{1:T})$ using the intermediate particle weights:
\begin{align}
  \hat Z_{\text{SMC}} := \prod_{t = 1}^T \left[\frac{1}{K} \sum_{k = 1}^K w_t^k \right]. \label{background:Z_SMC}
\end{align}
\begin{algorithm}[ht!]
  \KwData{observed values $y_{1:T}$, model parameters $\theta$, proposal parameters $\phi$}
  \Begin{
    Sample initial particle values $x_1^k \sim q_{1, \phi}(\cdot \given y_1)$. \\
    Compute and normalize weights:
    \begin{align*}
      w_1^k = \frac{\mu_{\theta}(x_1^k) g_{1, \theta}(y_1 \given x_1^k)}{q_{1, \phi}(x_1^k \given y_1)}, && \bar{w}_1^k = \frac{w_1^k}{\sum_{\ell = 1}^K w_1^\ell}.
    \end{align*}
        Initialize particle set: $\tilde{x}_1^k \leftarrow x_1^k$ \\
    \For{$t = 2, 3, \dotsc, T$}{
      Sample ancestor index $a_{t - 1}^k \sim \mathrm{Discrete}(\cdot \given \bar{w}_{t - 1}^1, \dotsc, \bar{w}_{t - 1}^K)$. \\
      Sample particle value $x_t^k \sim q_{t, \phi}(\cdot \given y_{1:t}, \tilde{x}_{1:t - 1}^{a_{t - 1}^k})$. \\
      Update particle set $\tilde{x}_{1:t}^k \leftarrow (\tilde{x}_{1:t - 1}^{a_{t - 1}^k}, x_t^k)$.\\
      Compute and normalize weights:
      \begin{align*}
        w_t^k = \frac{f_{t, \theta}(x_t^k \given {\tilde{x}}_{1:t - 1}^{a_{t - 1}^k}) g_{t, \theta}(y_t \given {\tilde{x}}_{1:t}^k)}{q_{t, \phi}(x_t^k \given y_{1:t}, {\tilde{x}}_{1:t - 1}^{a_{t - 1}^k})}, && \bar{w}_t^k = \frac{w_t^k}{\sum_{\ell = 1}^K w_t^\ell}.
      \end{align*}
      \vspace{-5pt}
    }
    Compute marginal likelihood: $\hat Z_{\text{SMC}} = \prod_{t=1}^{T} \frac{1}{K} \sum_{k = 1}^K w_t^k$.
  }
  \Return{particles $(\tilde{x}_{1:T}^k)_{k=1}^K$, weights $(w_T^k)_{k=1}^K$, marginal likelihood estimate $\hat Z_{\text{SMC}}$}
  \caption{\Glsdesc{SMC}}
  \label{alg:background/smc}
\end{algorithm}

The sequential nature of \gls{SMC} and the resampling step are crucial in making \gls{SMC} scalable to large $T$.
The former makes it easier to design efficient proposal distributions as each step need only target the next set of variables $x_t$. 
The resampling step allows the algorithm to focus on promising particles in light of new observations, avoiding the exponential divergence between the weights of different samples that occurs for importance sampling as $T$ increases.
This can be demonstrated both empirically and theoretically \citep[Chapter 9]{del2004feynman}.
We refer the reader to \citep{doucet2009tutorial} for an in-depth treatment of \gls{SMC}.

\subsection{Importance Weighted Auto-Encoders}
Given a dataset of observations $(y^{(n)})_{n = 1}^N$, a generative network $p_{\theta}(x, y)$ and an inference network $q_{\phi}(x \given y)$, \glspl{IWAE}~\citep{burda2016importance} maximize $\frac{1}{N} \sum_{n = 1}^N \ELBO_{\text{IS}}(\theta, \phi, y^{(n)})$ where, for a given observation $y$, the $\ELBO_{\text{IS}}$ (with $K$ particles) is a lower bound on $\log p_{\theta}(y)$ by Jensen's inequality:
\begin{align}
    &\ELBO_{\text{IS}}(\theta, \phi, y) = \int Q_{\text{IS}}(x^{1:K}) \log \hat Z_{\text{IS}}(x^{1:K}) \,\mathrm dx^{1:K} 
    \leq \log p_{\theta}(y), \text{ where } \label{eq:background/elbo-is} \\
    &Q_{\text{IS}}(x^{1:K}) = \prod_{k = 1}^K q_{\phi}(x^k \given y), 
    \,\,\,\, \hat Z_{\text{IS}}(x^{1:K}) = \frac{1}{K} \sum_{k = 1}^K \frac{p_{\theta}(x^k, y)}{q_{\phi}(x^k \given y)}. \label{eq:background/q_is_z_is}
\end{align}
Note that for $K = 1$ particle, this objective reduces to a \gls{VAE}~\citep{kingma2014auto,rezende2014stochastic} objective we will refer to as
\begin{align}
  \ELBO_{\text{VAE}}(\theta, \phi, y) = \int q_{\phi}(x \given y) (\log p_{\theta}(x, y) - \log q_{\phi}(x \given y)) \,\mathrm dx. \label{eqn:background/elbo_vae}
\end{align}

The \gls{IWAE} optimization is performed using \gls{SGA} where a sample from $\left(\prod_{k = 1}^K q_{\phi}(x^k \given y^{(n)})\right)$ is obtained using the reparameterization trick~\citep{kingma2014auto} and the gradient 
$\frac{1}{N}\sum_{n=1}^{N}\nabla_{\theta, \phi} \log \left(\sum_{k = 1}^K \frac{p_{\theta}(x^k, y^{(n)})}{q_{\phi}(x^k \given y^{(n)})}\right)$ is used to perform an optimization step.

\section{Auto-Encoding Sequential Monte Carlo}

\gls{AESMC} implements model learning, proposal adaptation, and inference
amortization in a similar
manner to the \gls{VAE} and the \gls{IWAE}:
it uses \gls{SGA} on an empirical average of the
\gls{ELBO} over observations.  However, it
varies in the form of this \gls{ELBO}.
In this section, we will introduce the \gls{AESMC} \gls{ELBO}, explain how gradients of it
can be estimated, and discuss the implications of these changes.

\subsection{Objective Function}
Consider a family of \glspl{SSM} $\{p_{\theta}(x_{1:T}, y_{1:T}): \theta \in \Theta\}$ and a family of proposal distributions $\{q_{\phi}(x_{1:T} \given y_{1:T}) = q_{1, \phi}(x_1 \given y_1) \prod_{t = 2}^T q_{t, \phi}(x_t \given x_{1:t - 1}, y_{1:t}): \phi \in \Phi\}$.
\Gls{AESMC} uses an \gls{ELBO} objective based on the \gls{SMC} marginal likelihood estimator \eqref{background:Z_SMC}.
In particular, for a given $y_{1:T}$, the objective is defined as
\begin{align}
  \ELBO_{\text{SMC}}(\theta, \phi, y_{1:T}) &:= \int Q_{\text{SMC}}(x_{1:T}^{1:K}, a_{1:T - 1}^{1:K}) \log \hat Z_{\text{SMC}}(x_{1:T}^{1:K}, a_{1:T - 1}^{1:K}) \,\mathrm dx_{1:T}^{1:K} \,\mathrm da_{1:T - 1}^{1:K},
\end{align}
where $\hat Z_{\text{SMC}}(x_{1:T}^{1:K}, a_{1:T - 1}^{1:K})$ is defined in \eqref{background:Z_SMC} and $Q_{\text{SMC}}$ is the sampling distribution of \gls{SMC},
\begin{align}
  Q_{\text{SMC}}(x_{1:T}^{1:K}, a_{1:T - 1}^{1:K}) &= \left(\prod_{k = 1}^K q_{1, \phi}(x_1^k)\right) \left( \prod_{t = 2}^T \prod_{k = 1}^K q_{t, \phi}(x_t^k \given \tilde x_{1:t - 1}^{a_{t - 1}^k}) \cdot \mathrm{Discrete}(a_{t - 1}^k \given w_{t - 1}^{1:K})\right). \label{eqn:aesmc/q_smc}
\end{align}

$\ELBO_{\text{SMC}}$ forms a lower bound to the log marginal likelihood $\log p_{\theta}(y_{1:T})$ due to Jensen's inequality
and the unbiasedness of the marginal likelihood estimator.
Hence, given a dataset $(y_{1:T}^{(n)})_{n = 1}^N$, we can perform model learning based on maximizing the lower bound of $\frac{1}{N} \sum_{n = 1}^N \log p_{\theta}(y_{1:T}^{(n)})$ as a surrogate target, namely by maximizing
\begin{align}
  \mathcal J(\theta, \phi) &:= \frac{1}{N} \sum_{n = 1}^N \ELBO_{\text{SMC}}(\theta, \phi, y_{1:T}^{(n)}). \label{eqn:aesmc/objective}
\end{align}
For notational convenience, we will talk about optimizing \glspl{ELBO} in the rest
of this section.  However, we note that the main intended use of \gls{AESMC} is to amortize over datasets, for which the \gls{ELBO} 
is replaced by the dataset average $\mathcal J(\theta, \phi)$ in the optimization target.  Nonetheless, rather than using the full dataset 
for each gradient update, will we instead use minibatches, noting that this forms unbiased estimator.

\subsection{Gradient Estimation}

We describe a gradient estimator used for optimizing $\ELBO_{\text{SMC}}(\theta, \phi, y_{1:T})$ using \gls{SGA}.
The \gls{SMC} sampler in Algorithm~\ref{alg:background/smc} proceeds by sampling
$x_1^{1:K}, a_1^{1:K}, x_2^{1:K}, \dotsc$ sequentially from their respective
distributions $\prod_{k = 1}^K q_1(x_1^k), \; \prod_{k = 1}^K
\mathrm{Discrete}(a_1^k \given w_1^{1:K}), \; \prod_{k = 1}^K q_2(x_2^k \given
x_1^{a_1^k}), \dotsc$ until the whole particle-weight trajectory
$(x_{1:K}^{1:T}, a_{1:T - 1}^{1:K})$ is sampled.
From this trajectory, using equation \eqref{background:Z_SMC}, we can obtain an estimator for the
marginal likelihood.

Assuming that the sampling of latent variables $x_{1:T}^{1:K}$ is reparameterizable, we can make their sampling independent of $(\theta, \phi)$.
In particular, assume that there exists a set of auxiliary random variables
$\epsilon_{1:T}^{1:K}$ where $\epsilon_t^k \sim s_t$ and a set of
reparameterization functions $r_t$.
We can simulate the \gls{SMC} sampler by first sampling $\epsilon_1^{1:K} \sim \prod_{k = 1}^K s_1$ and setting $x_1^k = r_1(\epsilon_1^k)$
and $\tilde{x}_1^k = x_1^k$, then
for $t=2,\dots,T$ cycling through
sampling $a_{t-1}^{1:K} \sim \prod_{k = 1}^K \mathrm{Discrete}(a_{t-1}^k \given w_{t-1}^{1:K})$ and
$\epsilon_{t}^{1:K} \sim \prod_{k = 1}^K s_{t}$, and setting $x_{t}^k = r_{t}(\epsilon_{t}^k, \tilde{x}_{1:t-1}^{a_{t-1}^k})$ and $\tilde{x}_{1:t}^k = (\tilde{x}_{1:t - 1}^{a_{t - 1}^k}, x_t^k)$.
We use the resulting reparameterized sample of $(x_{1:K}^{1:T}, a_{1:T - 1}^{1:K})$ to evaluate the
gradient estimator $\nabla_{\theta, \phi} \log \hat Z_{\text{SMC}}(x_{1:T}^{1:K}, a_{1:T - 1}^{1:K})$.

To account for the discrete choices of ancestor indices $a_t^k$ one could additionally use the \gls{REINFORCE}~\citep{williams1992simple} trick, however in practice, we found that the additional term in the estimator has problematically high variance. We explore various other possible gradient estimators and empirical assessments of their variances in Appendix~\ref{sec:gradients}.
This exploration confirms that including the additional \gls{REINFORCE} terms leads to problematically
high variance, justifying our decision to omit them, despite introducing a small bias into the gradient
estimates.

\subsection{Bias \& Implications on the Proposals}

In this section, we express the gap between \glspl{ELBO} and the log marginal likelihood as a \gls{KL} divergence and study implications on the proposal distributions.
We present a set of claims and propositions whose full proofs are in Appendix~\ref{sec:bias-proofs}.  These give insight into the behavior of \gls{AESMC}
and show the advantages, and disadvantages, of using
our different \gls{ELBO}.
This insight motivates Section~\ref{sec:improving_proposal_learning} which proposes an algorithm for improving proposal learning.

\begin{definition}
    \label{claim:aesmc/bias/elbo}
    Given an \emph{unnormalized target density} $\tilde P: \mathcal X \to [0, \infty)$
    with \emph{normalizing constant} $Z_P > 0$, $P := \tilde P / Z_P$,
     and a \emph{proposal density} $Q: \mathcal X \to [0, \infty)$, then
    \begin{align}
        \ELBO := \int Q(x) \log \frac{\tilde P(x)}{Q(x)} \,\mathrm dx, \label{eqn:aesmc/bias/elbo}
    \end{align}
    is a lower bound on $\log Z_P$ and satisfies
    \begin{align}
        \ELBO = \log Z_P - \KL{Q}{P}. \label{eqn:aesmc/bias/elbo2}
    \end{align}
\end{definition}
This is a standard identity used in variational inference and \glspl{VAE}.
In the case of \glspl{VAE}, applying Definition~\ref{claim:aesmc/bias/elbo} with $P$ being $p_{\theta}(x \given y)$, $\tilde{P}$ being $p_{\theta}(x, y)$, $Z_P$ being $p_{\theta}(y)$, and $Q$ being $q_{\phi}(x \given y)$, we can directly
rewrite \eqref{eqn:background/elbo_vae} as $\ELBO_{\text{VAE}}(\theta, \phi, y) = \log p_{\theta}(y) - \KL{q_{\phi}(x \given y)}{p_{\theta}(x \given y)}$.

The key observation for expressing such a bound for general \glspl{ELBO} such as $\ELBO_{\text{IS}}$ and $\ELBO_{\text{SMC}}$ is that the target density $P$ and the proposal density $Q$ need not directly correspond to $p_{\theta}(x \given y)$ and $q_{\phi}(x \given y)$.
This allows us to view the underlying sampling distributions of the marginal likelihood Monte Carlo estimators such as $Q_{\text{IS}}$ in \eqref{eq:background/q_is_z_is} and $Q_{\text{SMC}}$ in \eqref{eqn:aesmc/q_smc} as proposal distributions on an extended space $\mathcal X$.
The following claim uses this observation to express the bound between a general \gls{ELBO} and the log marginal likelihood as $\gls{KL}$ divergence from the extended space sampling distribution to a corresponding target distribution.
\begin{claim}
    \label{claim:aesmc/bias/estimator}
    Given a non-negative unbiased estimator $\hat Z_P(x) \geq 0$ of the normalizing constant $Z_P$ where $x$ is distributed according to the proposal distribution $Q(x)$, the following holds:
    \begin{align}
        \ELBO &= \int Q(x) \log \hat Z_P(x) \,\mathrm dx = \log Z_P - \KL{Q}{P},
        \label{eqn:aesmc/bias/elbo3} \\
        \text{where} \quad P(x) &= \frac{Q(x) \hat Z_P(x)}{Z_P} \label{eqn:aesmc/bias/P}
    \end{align}
    is the implied normalized target density.
\end{claim}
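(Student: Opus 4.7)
The plan is to deduce the claim as a direct consequence of Definition~\ref{claim:aesmc/bias/elbo} by making a clever choice of the unnormalized target on the space where $x$ lives. The whole point is that nothing in Definition~\ref{claim:aesmc/bias/elbo} requires the target to factor as $p_\theta(x,y)$; any non-negative density will do, so I will pick the one that makes the logarithm of the density ratio collapse to $\log\hat Z_P(x)$.

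Concretely, I would set $\tilde P(x) := Q(x)\,\hat Z_P(x)$. Non-negativity of $\tilde P$ is immediate from the non-negativity of $Q$ and $\hat Z_P$, and the unbiasedness hypothesis gives $\int \tilde P(x)\,\mathrm dx = \int Q(x)\hat Z_P(x)\,\mathrm dx = Z_P$, so the normalizing constant of $\tilde P$ is indeed $Z_P$ and the corresponding normalized density is exactly the $P$ in \eqref{eqn:aesmc/bias/P}. This step is what justifies the phrase ``implied normalized target density'' in the statement.

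Next I would apply Definition~\ref{claim:aesmc/bias/elbo} with this $\tilde P$ and with the given $Q$ as proposal. The identity \eqref{eqn:aesmc/bias/elbo2} reads
\begin{equation*}
\int Q(x)\log\frac{\tilde P(x)}{Q(x)}\,\mathrm dx \;=\; \log Z_P - \KL{Q}{P},
\end{equation*}
and by construction $\tilde P(x)/Q(x) = \hat Z_P(x)$ wherever $Q(x)>0$, so the left-hand side is $\int Q(x)\log\hat Z_P(x)\,\mathrm dx$, which is precisely $\ELBO$ as defined in the claim. This yields \eqref{eqn:aesmc/bias/elbo3}.

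The only real subtlety, and the one I would be most careful about, is the behavior at points where $Q(x)=0$ or $\hat Z_P(x)=0$, since then the ratio $\tilde P/Q$ is not literally $\hat Z_P$ and $\log\hat Z_P$ can be $-\infty$. This is handled by the standard measure-theoretic convention $0\log 0 = 0$ (used both in defining the expectation against $Q$ and in the $\KL$ divergence), together with the observation that $P$ is absolutely continuous with respect to $Q$ by its very definition, so the $\KL$ term is well-defined in $[0,\infty]$. Apart from this bookkeeping, the proof is essentially a one-line substitution, so the main intellectual content is recognizing the reparameterization $\tilde P = Q\,\hat Z_P$.
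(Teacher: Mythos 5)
Your proposal is correct and is essentially identical to the paper's own proof: the paper likewise takes $\tilde P(x) = Q(x)\hat Z_P(x)$, uses unbiasedness to identify the normalizing constant as $Z_P$, and substitutes into Definition~\ref{claim:aesmc/bias/elbo}. Your extra remarks on the $Q(x)=0$ and $\hat Z_P(x)=0$ sets are a careful refinement the paper simply omits.
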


In the case of \glspl{IWAE}, we can apply Claim~\ref{claim:aesmc/bias/estimator} with $Q$ and $\hat Z_P$ 
being $Q_{\text{IS}}$ and $\hat Z_{\text{IS}}$ respectively as defined in \eqref{eq:background/q_is_z_is} and $Z_P$ being $p_{\theta}(y)$.
This yields
\begin{align}
  \ELBO_{\text{IS}}(\theta, \phi, y) &= \log p_{\theta}(y) - \KL{Q_{\text{IS}}}{P_{\text{IS}}}, \text{ where } \label{eqn:aesmc/bias/elbo_is} \\
  P_{\text{IS}}(x^{1:K}) &= \frac{1}{K} \sum_{k = 1}^K \left(q_\phi(x^1 \given y) \cdots q_\phi(x^{k - 1} \given y) p_\theta(x^k \given y) q_\phi(x^{k + 1} \given y) \cdots q_\phi(x^K \given y) \right).
\end{align}
Similarly, in the case of \gls{AESMC}, we obtain
\begin{align}
  \ELBO_{\text{SMC}}(\theta, \phi, y_{1:T}) &= \log p_{\theta}(y_{1:T}) - \KL{Q_{\text{SMC}}}{P_{\text{SMC}}}, \text{ where } \label{eqn:aesmc/bias/elbo_smc} \\
  P_{\text{SMC}}(x_{1:T}^{1:K}, a_{1:T - 1}^{1:K}) &= Q_{\text{SMC}}(x_{1:T}^{1:K}, a_{1:T - 1}^{1:K}) \hat Z_{\text{SMC}}(x_{1:T}^{1:K}, a_{1:T - 1}^{1:K}) / p_{\theta}(y_{1:T}).
\end{align}

Having expressions for the target distribution $P$ and the sampling distribution $Q$ for a given \gls{ELBO} allows us to investigate what happens when we maximize that \gls{ELBO},
remembering that the \gls{KL} term is strictly non-negative  and zero if and only if $P = Q$.
For the \gls{VAE} and \gls{IWAE} cases then, provided the proposal is sufficiently flexible, one
can always perfectly maximize the \gls{ELBO} by setting
$p_{\theta}(x \given y) = q_{\phi}(x \given y)$ for all $x$.
The reverse implication also holds: if $\ELBO_{\text{VAE}}=\log Z_P$ then it must be the case that $p_{\theta}(x \given y) = q_{\phi}(x \given y)$.
However, for \gls{AESMC}, achieving $\ELBO=\log Z_P$ is only possible
when one
also has sufficient flexibility to learn a particular series of intermediate target distributions,
namely the marginals of the final target distribution.  In other words, it is necessary to
learn a particular factorization of the generative model, not just the correct individual
proposals,
to achieve $P=Q$ and thus $\ELBO_{\text{SMC}} = Z_P$.
These observations are formalized in Propositions~\ref{proposition:aesmc/bias/elbo_is} and \ref{proposition:aesmc/bias/elbo_smc} below.
\begin{proposition}
    \label{proposition:aesmc/bias/elbo_is}
    $Q_{\text{IS}}(x^{1:K}) = P_{\text{IS}}(x^{1:K})$ for all $x^{1:K}$ if and only if $q(x \given y) = p(x \given y)$ for all $x$.
\end{proposition}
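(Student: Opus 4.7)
The plan is to prove the two directions separately. The reverse direction ($q=p \implies Q_{\text{IS}} = P_{\text{IS}}$) is essentially immediate: substituting $p_\theta(x^k \mid y) = q_\phi(x^k \mid y)$ into each of the $K$ summands defining $P_{\text{IS}}$ makes every summand equal to $\prod_{k=1}^K q_\phi(x^k \mid y) = Q_{\text{IS}}(x^{1:K})$, so their average also equals $Q_{\text{IS}}$.

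For the forward direction, I would start from the assumed identity
\begin{equation*}
\prod_{k=1}^K q_\phi(x^k \mid y) \;=\; \frac{1}{K} \sum_{k=1}^K \left( p_\theta(x^k \mid y) \prod_{j \neq k} q_\phi(x^j \mid y) \right) \quad \text{for all } x^{1:K},
\end{equation*}
and divide both sides by the strictly positive factor $\prod_{k=1}^K q_\phi(x^k \mid y)$ (we may restrict to the support of $q_\phi$, which should contain the support of $p_\theta$ for the \textsc{kl} and \textsc{elbo} to be well-defined). Writing $r(x) := p_\theta(x \mid y)/q_\phi(x \mid y)$, this reduces the condition to
\begin{equation*}
\frac{1}{K} \sum_{k=1}^K r(x^k) = 1 \quad \text{for all } x^{1:K}.
\end{equation*}

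The key observation is then that a function of $K$ independent arguments that is a sum of one-argument functions can only be constant if each one-argument function is itself constant. Concretely, fixing $x^{2:K}$ arbitrarily and varying $x^1$ forces $r(x^1)$ to be constant; repeating for each coordinate gives $r \equiv c$. Plugging back yields $c = 1$, so $p_\theta(\cdot \mid y) = q_\phi(\cdot \mid y)$ pointwise (on the common support, which is all that matters for the claim).

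I do not anticipate a serious obstacle; the only minor subtlety is being careful that the argument runs on the support where $q_\phi > 0$ and justifying that this is enough to conclude equality of the two densities (any mass $p_\theta$ places outside the support of $q_\phi$ would make the \textsc{kl} infinite, contradicting $Q_{\text{IS}} = P_{\text{IS}}$). Everything else is bookkeeping: unpacking the definition of $P_{\text{IS}}$, the division step, and the elementary reduction of an additive identity to a constant function.
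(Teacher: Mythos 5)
Your proof is correct, and the backward direction coincides with the paper's. For the forward direction, however, you take a genuinely different route from the paper. Both start from the same identity $\prod_{k} q(x^k \given y) = \frac{1}{K}\sum_{k} p(x^k \given y)\prod_{j \neq k} q(x^j \given y)$, but the paper extracts $q = p$ by \emph{integrating} both sides over $x^{2:K}$: the left side marginalizes to $q(x^1 \given y)$ and the right side to $\frac{1}{K}\bigl[p(x^1 \given y) + (K-1)q(x^1 \given y)\bigr]$, after which rearranging gives the result with no division and no support bookkeeping, since normalization of the densities does all the work. You instead divide by $\prod_k q(x^k \given y)$ and solve the resulting pointwise functional equation $\frac{1}{K}\sum_k r(x^k) = 1$, using the fact that an additive function of independent arguments can only be constant if each summand is constant. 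Your argument is equally valid but must be confined to $\{q > 0\}$ and then extended; the cleanest way to close that last step is not the informal \textsc{kl}-is-infinite remark but normalization: $r \equiv 1$ on $\{q > 0\}$ gives $\int_{\{q>0\}} p(x \given y)\,\mathrm dx = 1$, so $p$ puts no mass off the support of $q$ and the densities agree (almost) everywhere. The trade-off is that your pointwise argument exposes the structural reason the identity forces $r \equiv 1$ (a constant sum of univariate terms), while the paper's marginalization is shorter and sidesteps the support caveat entirely.
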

\begin{proposition}
    \label{proposition:aesmc/bias/elbo_smc}
    If $K > 1$, then $P_{\text{SMC}}(x_{1:T}^{1:K}, a_{1:T - 1}^{1:K}) = Q_{\text{SMC}}(x_{1:T}^{1:K}, a_{1:T - 1}^{1:K})$ for all $(x_{1:T}^{1:K}, a_{1:T - 1}^{1:K})$ if and only if
    \begin{enumerate}
        \item $\pi_t (x_{1:t})= \int p(x_{1:T} \given y_{1:T}) \,\mathrm dx_{t + 1:T} = p(x_{1:t} \given y_{1:T})$ for all $x_{1:t}$ and $t = 1, \dotsc, T$, and
        \item $q_1(x_1 \given y_1) = p(x_1 \given y_{1:T})$ for all $x_1$ and $q_t(x_t \given x_{1:t - 1}, y_{1:t}) = p(x_{1:t} \given y_{1:T}) / p(x_{1:t - 1} \given y_{1:T})$ for $t = 2, \dotsc, T$ for all $x_{1:t}$,
    \end{enumerate}
    where $\pi_t (x_{1:t})$ are the intermediate targets used by \gls{SMC}.
\end{proposition}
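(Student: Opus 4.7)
The plan is to reduce the equality $P_{\text{SMC}} = Q_{\text{SMC}}$ to a deterministic-estimator condition and then identify what this forces on the $\pi_t$ and the $q_t$. By the defining relation from equation~\eqref{eqn:aesmc/bias/elbo_smc}, namely $P_{\text{SMC}}(\xa) = Q_{\text{SMC}}(\xa)\,\hat Z_{\text{SMC}}(\xa)/p_\theta(y_{1:T})$, the pointwise identity $P_{\text{SMC}} = Q_{\text{SMC}}$ holds if and only if $\hat Z_{\text{SMC}}(\xa) = p_\theta(y_{1:T})$ for $Q_{\text{SMC}}$-almost every trajectory. Because $\hat Z_{\text{SMC}}$ is non-negative and unbiased for $p_\theta(y_{1:T})$, this is in turn equivalent to $\Var_{Q_{\text{SMC}}}(\hat Z_{\text{SMC}}) = 0$.

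Next, I would write $\hat Z_{\text{SMC}} = \prod_{t=1}^T \hat Z_t$ with $\hat Z_t := \frac{1}{K}\sum_{k=1}^K w_t^k$ and show by induction on $t$ that the variance can only vanish if each factor $\hat Z_t$ is $Q_{\text{SMC}}$-a.s.\ constant. The key tool is the tower property together with the fact that, conditional on the particle history up to time $t-1$, $\prod_{s \geq t} \hat Z_s$ is itself a strictly positive unbiased SMC estimator of a conditional marginal likelihood, so any non-degenerate variation in $\hat Z_t$ would propagate to non-zero variance of the whole product. Then, because $K > 1$ and the $K$ particles at time $t$ are drawn independently from $q_t$ given their ancestors, the requirement that $\frac{1}{K}\sum_k w_t^k$ be a.s.\ constant across the full support of $q_t$ forces each individual weight $w_t^k$ to be a.s.\ constant in $(x_t^k, a_{t-1}^k)$ across particles. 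This support/averaging step is the main obstacle in the proof and is precisely where the hypothesis $K > 1$ is used in an essential way: for $K=1$ the average reduces to a single weight with no averaging available, and the characterization collapses to the IWAE case of Proposition~\ref{proposition:aesmc/bias/elbo_is}.

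Once each $w_t^k$ is pinned to a constant $c_t$ on the support of $q_t$, the generic SMC weight identity $w_t^k = \tilde\pi_t(\tilde x_{1:t}^k) / [\tilde\pi_{t-1}(\tilde x_{1:t-1}^{a_{t-1}^k})\, q_t(x_t^k \mid \tilde x_{1:t-1}^{a_{t-1}^k}, y_{1:t})]$ rearranges to $q_t(x_t \mid x_{1:t-1}, y_{1:t}) \propto \tilde\pi_t(x_{1:t})/\tilde\pi_{t-1}(x_{1:t-1})$ pointwise, so each $q_t$ is the normalized conditional of $\pi_t$ given $x_{1:t-1}$. Since $\pi_T(x_{1:T}) = p(x_{1:T}\mid y_{1:T})$ is the fixed endpoint target and the same $q_t$ must serve as the forward conditional of $\pi_t$ for every $t$, consistency of this telescoping chain forces $\pi_t(x_{1:t}) = \int p(x_{1:T}\mid y_{1:T})\,\mathrm dx_{t+1:T} = p(x_{1:t}\mid y_{1:T})$, yielding condition (1), while $q_t$ must equal the corresponding smoothing conditional, yielding condition (2). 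The reverse implication is a direct substitution: under (1) and (2) every $w_t^k$ collapses to the deterministic ratio of the normalizing constants of $\tilde\pi_t$ and $\tilde\pi_{t-1}$, so $\hat Z_{\text{SMC}}$ is deterministic and, by unbiasedness, equal to $p_\theta(y_{1:T})$, giving $P_{\text{SMC}} = Q_{\text{SMC}}$.
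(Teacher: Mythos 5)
Your overall skeleton is the same as the paper's (reduce $P_{\text{SMC}}=Q_{\text{SMC}}$ to $\hat Z_{\text{SMC}}$ being almost surely equal to $p_\theta(y_{1:T})$, deduce that each incremental weight $w_t(x_{1:t})$ is a constant function, then recover conditions 1 and 2 from normalization and telescoping, with the converse by substitution), and the final stages do match the paper's argument. However, the crucial middle step is asserted rather than proved, and this is a genuine gap. From a.s.\ constancy of the product $\prod_t \hat Z_t$ it does not follow in any obvious way that each factor $\hat Z_t$ (let alone each individual weight $w_t^k$) is a.s.\ constant: variation in $\hat Z_t$ could in principle be compensated by later factors, and indeed for $K=1$ exactly this happens, so your heuristic that ``any non-degenerate variation in $\hat Z_t$ would propagate to non-zero variance of the whole product'' cannot be correct as stated, since it makes no real use of $K>1$. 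Moreover, the place where you claim $K>1$ is essential---the averaging step ``$\frac{1}{K}\sum_k w_t^k$ constant over a product support forces each $w_t^k$ constant''---is not where it is needed at all; that implication holds just as well for $K=1$. The conditional-expectation fact you invoke is also imprecise: given the history up to $t-1$, $\prod_{s\ge t}\hat Z_s$ is unbiased for a particle-dependent mixture $\sum_k \bar w_{t-1}^k \int \gamma_T(\tilde x_{1:t-1}^k, x_{t:T})\,\mathrm dx_{t:T}/\gamma_{t-1}(\tilde x_{1:t-1}^k)$, not a single conditional marginal likelihood, and this by itself does not rule out cancellation across time steps.

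The paper closes this gap with a purely pointwise combinatorial construction: for arbitrary $t$, $x_{1:t}$ and $x'_{1:t}$ it builds two particle configurations that agree in every weight except the single weight $w_t^k$, by placing the two prefixes $x_{1:t-1}$ and $x'_{1:t-1}$ in two \emph{distinct} slots $k\neq \ell$ (this is precisely where $K>1$ is used) and routing the ancestor indices so that the perturbed particle has no descendants, hence no later weight changes. Equality of $\hat Z_{\text{SMC}}$ on the two configurations then forces $w_t(x_{1:t})=w_t(x'_{1:t})$, i.e.\ constancy of $w_t$ as a function, after which your normalization/telescoping argument applies verbatim. A probabilistic conditioning/backward-induction version of this can be made rigorous, but it must contain the same idea---comparing two particles (or two ancestor choices) within one realization so that a time-$t$ perturbation is decoupled from all later weights---and that idea is missing from your proposal; without it, the claim that constancy of the product forces constancy of each weight is unsupported.
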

Proposition~\ref{proposition:aesmc/bias/elbo_smc} has the consequence that if the family of generative models is such that
the first condition does not hold, we will not be able to
make the bound tight.  This means that,
except for a very small class of models,  then, for most
 convenient parameterizations, it will be impossible to learn a perfect proposal that
 gives a tight bound, i.e. there will be no $\theta$ and $\phi$ such that the above conditions
 can be satisfied.  However, it also means that $\ELBO_{\text{SMC}}$ encodes important
 additional information about the implications the factorization of the generative model
 has on the inference---the model depends only on
 the final target $\pi_T(x_{1:T}) = p_{\theta}(x_{1:T} | y_{1:T})$, but some choices of
 the intermediate targets $\pi_t(x_{1:t})$ will lead to much more efficient inference than
 others.
 Perhaps more importantly, \gls{SMC} is usually a
 far more powerful inference algorithm than importance sampling and so the \gls{AESMC}
 setup allows for more ambitious model learning problems to be effectively tackled than
 the \gls{VAE} or \gls{IWAE}.  After all, even though it is well known in the \gls{SMC} literature
 that, unlike
 for \gls{IS}, most problems have no perfect set of \gls{SMC} proposals which
 will generate exact samples from
 the posterior~\citep{doucet2009tutorial}, \gls{SMC} still gives superior performance
 on most problems with more than a few dimensions.
These intuitions are backed up by our experiments that show that using $\ELBO_{\text{SMC}}$
regularly learns better models than using $\ELBO_{\text{IS}}$.

\section{Improving Proposal Learning}
\label{sec:improving_proposal_learning}

In practice, one is rarely able to perfectly drive the divergence to zero and achieve a perfect proposal.
In addition to the implications of the previous section, this occurs because $q_{\phi}(x_{1:T} \given y_{1:T})$
may not be sufficiently expressive to represent $p_{\theta}(x_{1:T} \given y_{1:T})$ exactly and
because of the inevitable sub-optimality of the optimization process, remembering that we are aiming to
learn an amortized inference artifact, rather than a single posterior representation.  Consequently, 
to accurately assess the merits of different~\glspl{ELBO} for proposal learning, 
it is necessary to consider their finite-time performance.  We therefore now consider the effect the
number of particles $K$ has on the gradient estimators for  $\ELBO_{\text{IS}}$ and $\ELBO_{\text{SMC}}$.

\begin{wrapfigure}{r}{0.24\textwidth}
    \centering
    \vspace{-9pt}
        \includegraphics[width=0.23\textwidth,trim={0 0 0 0cm},clip]{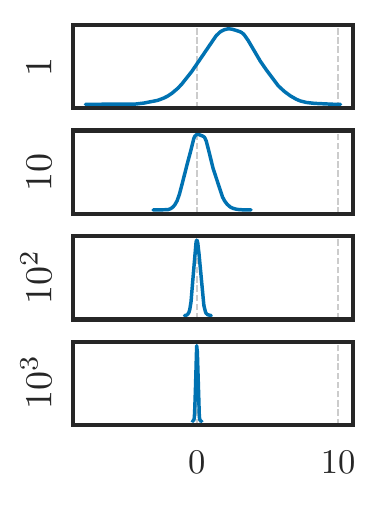}
        \vspace{-15pt}
    \caption{Density estimate of $\nabla_{\phi} \ELBO$
        for different $K$ \label{fig:kde}}
    \vspace{-12pt}
\end{wrapfigure}
Counter-intuitively, it transpires that the tighter bounds implied by using a larger
$K$ is often harmful to proposal learning for both \gls{IWAE} and \gls{AESMC}.
At a high-level, this is because an accurate estimate for $\hat{Z}_P$ can be achieved
for a wide range of proposal parameters $\phi$ and so the magnitude of
$\nabla_{\phi} \ELBO$ reduces as $K$ increases.  Typically, this shrinkage happens faster than
increasing $K$ reduces the standard deviation of the estimate and so the standard deviation of the 
gradient estimate relative to the problem scaling (i.e. as a ratio of true gradient $\nabla_{\phi} \ELBO$) actually increases.
This effect is demonstrated in Figure~\ref{fig:kde}
which shows a kernel density estimator for the distribution of the gradient
estimate for different $K$ and the model given in Section~\ref{sec:experiments/proposal_learning}.  Here
we see that as we increase $K$, both the expected gradient estimate (which is equal to the true gradient by unbiasedness)
and standard deviation of the estimate decrease.  However, the
former decreases faster and so the relative standard deviation increases.
 This is perhaps easiest
to appreciate by noting that for $K > 10$, there is a roughly equal probability
of the estimate being positive or negative, such that we are equally likely to increase or decrease
the parameter value at the next \gls{SGA} iteration, inevitably leading to poor performance.
On the other hand, when $K=1$, it is far more likely that the gradient estimate is positive
than negative, and so there is clear drift to the gradient steps.  We add to the
empirical evidence for this behavior in Section~\ref{sec:exp}. Note the critical
difference for model learning is that $\nabla_{\theta} \ELBO$ does not,
in general, decrease in magnitude as $K$ increases.  Note also that using a larger
$K$ should always give better performance at test time; it may though
be better to learn $\phi$ using a smaller $K$.

In simultaneously developed work~\citep{rainforth2017tighter}, we formalized this intuition in
the \gls{IWAE} setting by showing
that the estimator of $\nabla_\phi \ELBO_{\text{IS}}(\theta, \phi, x)$ with $K$ particles, denoted by $I_K$, has the following \gls{SNR}:
\begin{align}
  \SNR := \frac{\E[I_K]}{\sqrt{\Var[I_K]}} = O\left(\sqrt{\frac{1}{K}}\right).
\end{align}
We thus see that increasing $K$ reduces the \gls{SNR} and so the gradient updates for the proposal
will degrade towards pure noise if $K$ is set too high.

\subsection{Alternating ELBOs}
\label{sec:aesmc/alt}

To address these issues, we suggest and investigate the \gls{ALT} algorithm
which updates $(\theta, \phi)$ in a coordinate descent
fashion using different \glspl{ELBO}, and thus gradient estimates, for each.
We pick a $\theta$-optimizing pair and a $\phi$-optimizing pair $(A_{\theta}, K_{\theta}), (A_{\phi}, K_{\phi}) \in \{\gls{IS}, \gls{SMC}\} \times \{1, 2, \dotsc\}$,
corresponding to an inference type and number of particles.
In an optimization step, we obtain an estimator for $\nabla_{\theta} \ELBO_{A_{\theta}}$ with $K_{\theta}$ particles and an estimator for $\nabla_{\phi} \ELBO_{A_{\phi}}$ with $K_{\phi}$ particles which we call $g_{\theta}$ and $g_{\phi}$ respectively.
We use $g_{\theta}$ to update the current $\theta$ and $g_{\phi}$ to update the current $\phi$.
The results from the previous sections suggest that using $A_{\theta} = \gls{SMC}$ and $A_{\phi} = \gls{IS}$ with a large $K_{\theta}$ and a small $K_{\phi}$ may perform better model and proposal learning than just fixing $(A_{\theta}, K_{\theta}) = (A_{\phi}, K_{\phi})$ to $(\gls{SMC}, \text{large})$ since using $A_{\phi} = \gls{IS}$ with small $K_{\phi}$ helps learning $\phi$ (at least in terms of the~\gls{SNR})
and using $A_{\theta} = \gls{SMC}$ with large $K_{\theta}$ helps learning $\theta$.
We experimentally observe that this procedure can in some cases improve both model and proposal learning.

\section{Experiments}
\label{sec:exp}

We now present a series of experiments designed to answer the following questions:
1) Does tightening the bound by using either more particles or a better inference procedure lead to an adverse 
effect on proposal learning?
2) Can \gls{AESMC}, despite this effect, outperform \gls{IWAE}?
3) Can we further improve the learned model and proposal by using \gls{ALT}?

First we investigate a \gls{LGSSM} for model learning and a latent variable model for proposal adaptation. This allows us to compare the learned parameters to the optimal ones. Doing so, we confirm our conclusions for this simple problem.

We then extend those results to more complex, high dimensional observation spaces that require models and proposals parameterized by neural networks. We do so by investigating the \emph{Moving Agents} dataset, a set of partially occluded video sequences.

\subsection{Linear Gaussian State Space Model}
\label{sec:experiments/lgssm}

Given the following \gls{LGSSM}
\begin{align}
  p(x_1) &= \mathrm{Normal}\left(x_1; 0, 1^2\right), \\
  p(x_t \given x_{t - 1}) &= \mathrm{Normal}\left(x_t; \theta_1 x_{t - 1}, 1^2\right), && t = 2, \dotsc T, \\
  p(y_t \given x_t) &= \mathrm{Normal}\left(y_t; \theta_2 x_t, \sqrt{0.1}^2\right), && t = 1, \dotsc, T,
\end{align}
we find that optimizing $\ELBO_{\text{SMC}}(\theta, \phi, y_{1:T})$ w.r.t. $\theta$ leads to better generative models than optimizing $\ELBO_{\text{IS}}(\theta, \phi, y_{1:T})$. The same is true for using more particles.

We generate a sequence $y_{1:T}$ for $T = 200$ by sampling from the model with $\theta = (\theta_1, \theta_2) = (0.9, 1.0)$.
We then optimize the different $\gls{ELBO}$s w.r.t. $\theta$ using the bootstrap proposal $q_1(x_1 \given y_1) = \mu_{\theta}(x_1)$ and $q_t(x_t \given x_{1:t - 1}, y_{1:t}) = f_{t, \theta}(x_t \given x_{1:t - 1})$.
Because we use the bootstrap proposal, gradients w.r.t. to $\theta$ are not
backpropagated through $q$.

We use a fixed learning rate of $0.01$ and optimize for $500$ steps using \gls{SGA}.
Figure~\ref{fig:experiments/lgssm/logz_and_params} shows that the convergence of both $\log p_{\theta}(y_{1:T})$ to $\max_{\theta} \log p_{\theta}(y_{1:T})$ and $\theta$ to $\argmax_{\theta} \log p_{\theta}(y_{1:T})$ is faster when $\ELBO_{\text{SMC}}$ and more particles are used.
\begin{figure}[!htb]
\centering
\begin{subfigure}{.49\textwidth}
  \centering
  \includegraphics{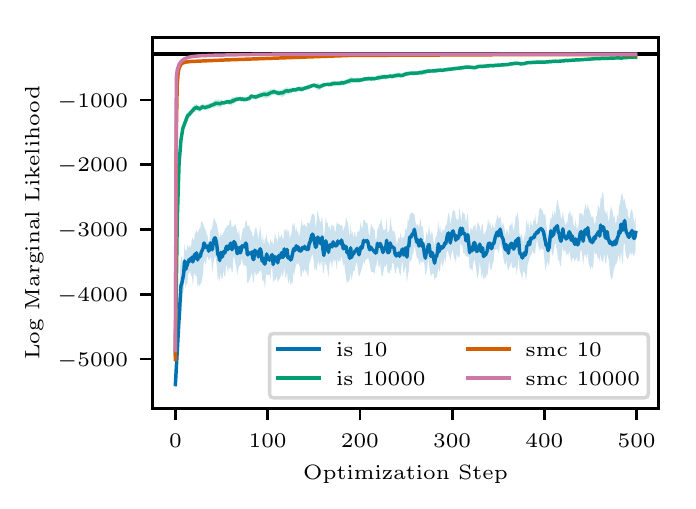}
\end{subfigure}
\begin{subfigure}{.49\textwidth}
  \centering
  \includegraphics{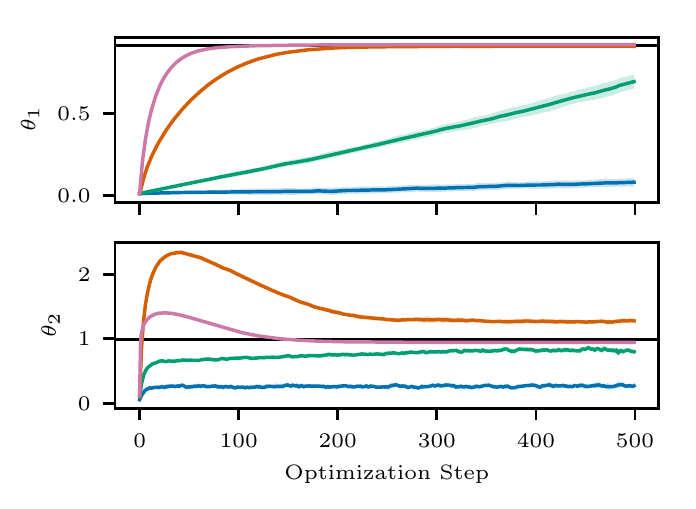}
\end{subfigure}
\caption{(Left) Log marginal likelihood analytically evaluated at every $\theta$ during optimization; the black line indicates $\max_{\theta} \log p_{\theta}(y_{1:T})$ obtained by the \gls{EM} algorithm. (Right) learning of model parameters; the black line indicates $\argmax_{\theta} \log p_{\theta}(y_{1:T})$ obtained by the \gls{EM} algorithm.}
\label{fig:experiments/lgssm/logz_and_params}
\end{figure}

\subsection{Proposal Learning}
We now investigate how learning $\phi$, i.e. the proposal, is affected by the the choice of \gls{ELBO} and the number of particles.

\label{sec:experiments/proposal_learning}
Consider a simple, fixed generative model $p(\mu)p(x \given \mu) = \mathrm{Normal}(\mu; 0, 1^2)\mathrm{Normal}(x; \mu, 1^2)$ where $\mu$ and $x$ are the latent and observed variables respectively and a family of proposal distributions $q_{\phi}(\mu) = \mathrm{Normal}(\mu; \mu_q, \sigma_q^2)$ parameterized by $\phi = (\mu_q, \log \sigma_q^2)$.
For a fixed observation $x = 2.3$, we initialize $\phi = (0.01, 0.01)$ and optimize $\ELBO_{\text{IS}}$ with respect to $\phi$.
We investigate the quality of the learned parameter $\phi$ as we increase the number of particles $K$ during training.
Figure~\ref{fig:experiments/proposals/lgssm}~(left) clearly demonstrates that the quality of $\phi$ compared to the analytic posterior decreases as we increase $K$.

Similar behavior is observed in Figure~\ref{fig:experiments/proposals/lgssm}~(middle, right) where we optimize $\ELBO_{\text{SMC}}$ with respect to both $\theta$ and $\phi$ for the \gls{LGSSM} described in Section~\ref{sec:experiments/lgssm}.
We see that using more particles helps model learning but makes proposal learning worse.
Using our \gls{ALT} algorithm alleviates this problem and at the same time makes model learning faster as it profits from a more accurate proposal distribution.
We provide more extensive experiments exploring proposal learning with different \glspl{ELBO} and number of particles in Appendix~\ref{sec:additional-experiments}.
\begin{figure}[!htb]
\centering
\begin{subfigure}{.32\textwidth}
  \centering
  \includegraphics{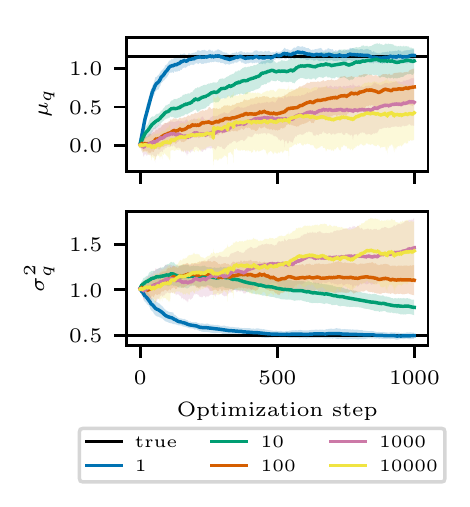}
\end{subfigure}
\begin{subfigure}{.64\textwidth}
  \centering
  \includegraphics{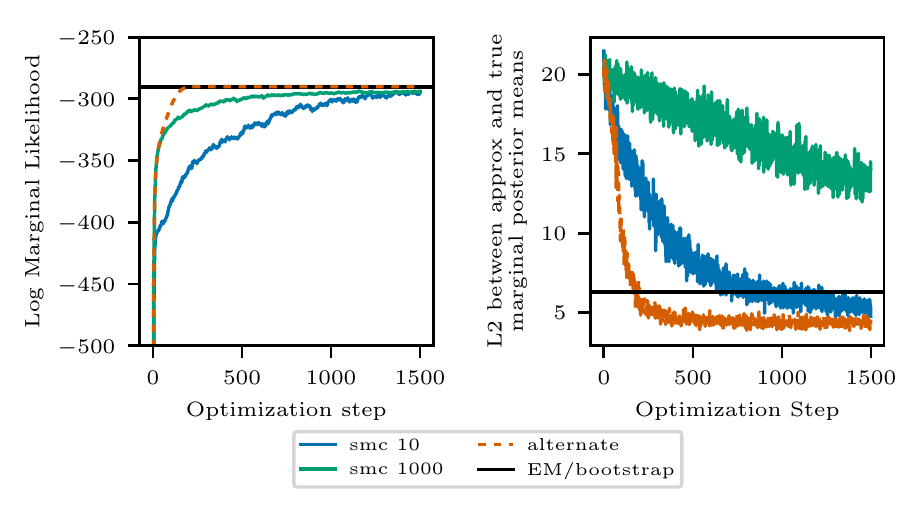}
\end{subfigure}
\caption{\emph{(Left)} Optimizing $\ELBO_{\text{IS}}$ for the Gaussian unknown mean model with respect to $\phi$ results in worse $\phi$ as we increase number of particles $K$. \emph{(Middle, right)} Optimizing $\ELBO_{\text{SMC}}$ with respect to $(\theta, \phi)$ for \gls{LGSSM} and using the \gls{ALT} algorithm for updating $(\theta, \phi)$ with $(A_{\theta}, K_{\theta}) = (\gls{SMC}, 1000)$ and $(A_{\phi}, K_{\phi}) = (\gls{IS}, 10)$. \emph{Right} measures the quality of $\phi$ by showing $\sqrt{\sum_{t = 1}^T (\mu_t^{\text{kalman}} - \mu_t^{\text{approx}})^2}$ where $\mu_t^{\text{kalman}}$ is the marginal mean obtained from the Kalman smoothing algorithm under the model with \gls{EM}-optimized parameters and $\mu_t^{\text{approx}}$ is an marginal mean obtained from the set of $10$ \gls{SMC} particles with learned/bootstrap proposal.}
\label{fig:experiments/proposals/lgssm}
\end{figure}

\subsection{Moving Agents}
\label{sec:moving-agents}

To show that our results are applicable to complex, high dimensional data we compare \gls{AESMC} and \gls{IWAE} on stochastic, partially observable video sequences.
Figure~\ref{fig:experiments/max/visualisation} in Appendix~\ref{sec:appendix_moving_agents} shows an example of such a sequence.

The dataset consists of $N = 5000$ sequences of images $(y_{1:T}^{(n)})_{n = 1}^{N}$ of which 1000 are randomly held out as test set.
Each sequence contains $T = 40$ images represented as a 2 dimensional array of size $32\times32$.
In each sequence there is one agent, represented as circle, whose starting position is sampled randomly along the top and bottom of the image.
The dataset is inspired by \citep{ondruska2016deep}, however with the crucial difference that the movement of the agent is \emph{stochastic}.
The agent performs a directed random walk through the image.
At each timestep, it moves according to
\begin{equation}
  \begin{split}
    y_{t+1} & \sim \mathrm{Normal}(y_{t+1}; y_t+0.15, 0.02^2)\\
    x_{t+1} & \sim \mathrm{Normal}(x_{t+1}; 0, 0.02^2)
  \end{split}
\end{equation}
where $(x_t, y_t)$ are the coordinates in frame $t$ in a unit square that is then projected onto $32\times32$ pixels.
In addition to the stochasticity of the movement, half of the image is occluded, preventing the agent from being observed.

For the generative model and proposal distribution we use a \gls{VRNN} \citep{chung2015recurrent}.
It extends \glspl{RNN} by introducing a stochastic latent state $x_t$ at each timestep $t$. Together with the observation $y_t$, this state conditions the deterministic transition of the \gls{RNN}. By introducing this unobserved stochastic state, the \gls{VRNN} is able to better model complex long range variability in stochastic sequences.
Architecture and hyperparameter details are given in Appendix~\ref{sec:appendix_vrnn}.

Figure \ref{fig:experiments/max/elbos} shows
$\max(\ELBO_{\text{IS}},\ELBO_{\text{SMC}})$ for models trained with
\gls{IWAE} and \gls{AESMC} for different particle numbers. The lines correspond
to the mean over three different random seeds and the shaded areas indicate the
standard deviation.
The same number of particles was used for training and testing, additional
hyperparameter settings are given in the appendix.
One can see that models trained using
\gls{AESMC} outperform \gls{IWAE} and using more particles improves the
\gls{ELBO} for both.
In Appendix~\ref{sec:appendix_moving_agents}, we inspect different learned generative models by using them for prediction, confirming the results presented here.
We also tested \gls{ALT} on this task, but found that while it did occasionally
improve performance, it was much less stable than \gls{IWAE} and \gls{AESMC}.

\begin{figure}[!htb]
\centering
\begin{subfigure}{.49\textwidth}
  \centering
  \includegraphics[width=.95\linewidth]{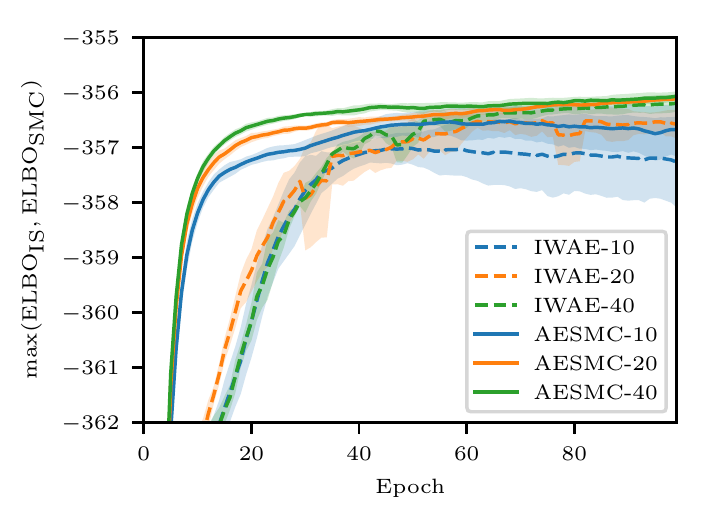}
\end{subfigure}
\begin{subfigure}{.49\textwidth}
  \centering
  \begin{tabular}{llllll}
    \toprule
  Particles           & Method        & Moving Agents   \\ \hline
  \multirow{2}{*}{10}  & \gls{IWAE}  & -357.3          \\
                      & \gls{AESMC} & \textbf{-356.7}          \\\hline
  \multirow{2}{*}{20} & \gls{IWAE}  & -356.6          \\
                      & \gls{AESMC} & \textbf{-356.1}          \\\hline
  \multirow{2}{*}{40} & \gls{IWAE}  & -356.2          \\
                      & \gls{AESMC} & \textbf{-356.1}    \\   
    \bottomrule
  \end{tabular}
\end{subfigure}
\caption{\emph{(Left)} Rolling mean over 5 epochs of $\max(\ELBO_{\text{SMC}},
  \ELBO_{\text{IS}})$ on the test set, lines indicate the average over 3 random
  seeds and shaded areas indicate standard deviation. The color indicates the number of particles, the line style the used algorithm. \emph{(Right)} The table shows the final $\max(\ELBO_{\text{SMC}}, \ELBO_{\text{IS}})$ for each learned model.}
\label{fig:experiments/max/elbos}
\end{figure}

\section{Conclusions}

We have developed \gls{AESMC}---a method for performing model learning using a new \gls{ELBO} objective which is based on the \gls{SMC} marginal likelihood estimator.
This \gls{ELBO} objective is optimized using \gls{SGA} and the reparameterization trick.
Our approach utilizes the efficiency of \gls{SMC} in models with intermediate observations and hence is suitable for highly structured models.
We experimentally demonstrated that this objective leads to better generative model training
than the \gls{IWAE} objective for structured problems, due to the superior inference and tighter bound
provided by using \gls{SMC} instead of importance sampling.

Additionally, in Claim~\ref{claim:aesmc/bias/estimator}, we provide a simple way to express the bias of objectives induced by log of marginal likelihood estimators as a \gls{KL} divergence on an extended space.
In Propositions~\ref{proposition:aesmc/bias/elbo_is} and \ref{proposition:aesmc/bias/elbo_smc}, we investigate the implications of these \glspl{KL} being zero in the case of \gls{IWAE} and \gls{AESMC}.
In the latter case, we find that we can achieve zero \gls{KL} only if 
we are able to learn \gls{SMC} intermediate target distributions corresponding to
marginals of the target distribution.  
Using our assertion that 
tighter variational bounds are not necessarily better, we then introduce
and test a new method, alternating~\glspl{ELBO}, that addresses some of these issues and 
observe that, in some cases, this improves both model and proposal learning.

\newpage

\subsubsection*{Acknowledgments}

TAL is supported by EPSRC DTA and Google (project code DF6700) studentships.
MI is supported by the UK EPSRC CDT in Autonomous Intelligent Machines and Systems.
TR is supported by the European Research Council under the European Union’s Seventh Framework Programme (FP7/2007-2013) ERC grant agreement no. 617071; majority of TR's work was undertaken while he was in the Department of Engineering Science, University of Oxford, and was supported by a BP industrial grant.
TJ is supported by the UK EPSRC and MRC CDT in Statistical Science.
FW is supported by The Alan Turing Institute under the EPSRC grant EP/N510129/1; DARPA PPAML through the U.S. AFRL under Cooperative Agreement FA8750-14-2-0006; Intel and DARPA D3M, under Cooperative Agreement FA8750-17-2-0093.

\bibliography{main}
\bibliographystyle{iclr2018_conference}

\appendix

\newpage
\section{Gradients}
\label{sec:gradients}

The goal is to obtain an unbiased estimator for the gradient
\begin{align}
    \nabla_{\theta, \phi} \int Q_{\text{SMC}}(x_{1:T}^{1:K}, a_{1:T - 1}^{1:K}) \log \hat Z_{\text{SMC}}(x_{1:T}^{1:K}, a_{1:T - 1}^{1:K}) \,\mathrm dx_{1:T}^{1:K} \,\mathrm da_{1:T - 1}^{1:K}.
\end{align}

\subsection{Full Reinforce}
We express the required quantity as
\begin{align}
    &\nabla_{\theta, \phi} \int Q_{\text{SMC}}(x_{1:T}^{1:K}, a_{1:T - 1}^{1:K}) \log \hat Z_{\text{SMC}}(x_{1:T}^{1:K}, a_{1:T - 1}^{1:K}) \,\mathrm dx_{1:T}^{1:K} \,\mathrm da_{1:T - 1}^{1:K} \\
    &= \int \nabla_{\theta, \phi} Q_{\text{SMC}}(x_{1:T}^{1:K}, a_{1:T - 1}^{1:K}) \log \hat Z_{\text{SMC}}(x_{1:T}^{1:K}, a_{1:T - 1}^{1:K}) + \\
    & \hspace{3em} Q_{\text{SMC}}(x_{1:T}^{1:K}, a_{1:T - 1}^{1:K}) \nabla_{\theta, \phi} \log \hat Z_{\text{SMC}}(x_{1:T}^{1:K}, a_{1:T - 1}^{1:K}) \,\mathrm dx_{1:T}^{1:K} \,\mathrm da_{1:T - 1}^{1:K} \\
    &= \int Q_{\text{SMC}}(x_{1:T}^{1:K}, a_{1:T - 1}^{1:K}) \left[\nabla_{\theta, \phi} \log Q_{\text{SMC}}(x_{1:T}^{1:K}, a_{1:T - 1}^{1:K}) \log \hat Z_{\text{SMC}}(x_{1:T}^{1:K}, a_{1:T - 1}^{1:K}) + \right. \\
    &\left. \hspace{3em} \nabla_{\theta, \phi} \log \hat Z_{\text{SMC}}(x_{1:T}^{1:K}, a_{1:T - 1}^{1:K})\right]  \,\mathrm dx_{1:T}^{1:K} \,\mathrm da_{1:T - 1}^{1:K},
\end{align}
which we can estimate by sampling $(x_{1:T}^{1:K}, a_{1:T - 1}^{1:K})$ directly from $Q_{\text{SMC}}$ and evaluating $\left[\nabla_{\theta, \phi} \log Q_{\text{SMC}}(x_{1:T}^{1:K}, a_{1:T - 1}^{1:K}) \log \hat Z_{\text{SMC}}(x_{1:T}^{1:K}, a_{1:T - 1}^{1:K}) + \nabla_{\theta, \phi} \log \hat Z_{\text{SMC}}(x_{1:T}^{1:K}, a_{1:T - 1}^{1:K})\right]$.

\subsection{Reinforce \& Reparameterization}
\label{sec:gradients/reinforce-reparam}

We express the required quantity as
\begin{align}
    \nabla_{\theta, \phi} &\int Q_{\text{SMC}}(x_{1:T}^{1:K}, a_{1:T - 1}^{1:K}) \log \hat Z_{\text{SMC}}(x_{1:T}^{1:K}, a_{1:T - 1}^{1:K}) \,\mathrm dx_{1:T}^{1:K} \,\mathrm da_{1:T - 1}^{1:K} \\
    &= \nabla_{\theta, \phi} \int \left(\prod_{k = 1}^K q_1(x_1^k)\right) \left( \prod_{t = 2}^T \prod_{k = 1}^K q_t(x_t^k \given x_{t - 1}^{a_{t - 1}^k}) \cdot \mathrm{Discrete}(a_{t - 1}^k \given w_{t - 1}^{1:K})\right) \nonumber\\
    & \hspace{3em} \log \hat Z_{\text{SMC}}(x_{1:T}^{1:K}, a_{1:T - 1}^{1:K}) \,\mathrm dx_{1:T}^{1:K} \,\mathrm da_{1:T - 1}^{1:K} \\
    &= \nabla_{\theta, \phi} \int \left(\prod_{k = 1}^K s_1(\epsilon_1^k)\right) \left( \prod_{t = 2}^T \prod_{k = 1}^K s_t(\epsilon_t^k) \cdot \mathrm{Discrete}(a_{t - 1}^k \given w_{t - 1}^{1:K})\right) \nonumber\\
    & \hspace{3em} \log \hat Z_{\text{SMC}}(r(\epsilon_{1:T}^{1:K}), a_{1:T - 1}^{1:K}) \,\mathrm d\epsilon_{1:T}^{1:K} \,\mathrm da_{1:T - 1}^{1:K} \\
    &= \int \left( \prod_{t = 1}^T \prod_{k = 1}^K s_t(\epsilon_t^k) \right) \left[\nabla_{\theta, \phi} \prod_{t = 2}^T \prod_{k = 1}^K \mathrm{Discrete}(a_{t - 1}^k \given w_{t - 1}^{1:K}) \log \hat Z_{\text{SMC}}(r(\epsilon_{1:T}^{1:K}), a_{1:T - 1}^{1:K}) + \right. \nonumber\\
    &\hspace{3em} \left. \left( \prod_{t = 2}^T \prod_{k = 1}^K \mathrm{Discrete}(a_{t - 1}^k \given w_{t - 1}^{1:K}) \right) \nabla_{\theta, \phi} \log \hat Z_{\text{SMC}}(r(\epsilon_{1:T}^{1:K}), a_{1:T - 1}^{1:K}) \right] \,\mathrm d\epsilon_{1:T}^{1:K} \,\mathrm da_{1:T - 1}^{1:K}  \\
    &= \int \left( \prod_{t = 1}^T \prod_{k = 1}^K s_t(\epsilon_t^k) \right) \left( \prod_{t = 2}^T \prod_{k = 1}^K \mathrm{Discrete}(a_{t - 1}^k \given w_{t - 1}^{1:K}) \right) \cdot \nonumber\\
    & \hspace{3em} \left[
        \nabla_{\theta, \phi} \log\left(
            \prod_{t = 2}^T \prod_{k = 1}^K \mathrm{Discrete}(a_{t - 1}^k \given w_{t - 1}^{1:K})
        \right)
        \log \hat Z_{\text{SMC}}(r(\epsilon_{1:T}^{1:K}), a_{1:T - 1}^{1:K}) + \right.  \nonumber\\
    & \hspace{3em} \left.\nabla_{\theta, \phi} \log \hat Z_{\text{SMC}}(r(\epsilon_{1:T}^{1:K}), a_{1:T - 1}^{1:K}) \label{eq:gradients/reinforce-reparam}
    \right] \,\mathrm d\epsilon_{1:T}^{1:K} \,\mathrm da_{1:T - 1}^{1:K},
\end{align}
where $r(\epsilon_{1:T}^{1:K})$ denotes a sample with identical distribution as $x_{1:T}^{1:K}$ obtained by passing the auxiliary samples $\epsilon_{1:T}^{1:K}$ through the reparameterization function.
We can thus estimate the gradient by sampling $\epsilon_{1:T}^{1:K}$ from the auxiliary distribution, reparameterizing and evaluating $\left[
    \nabla_{\theta, \phi} \log\left(
        \prod_{t = 2}^T \prod_{k = 1}^K \mathrm{Discrete}(a_{t - 1}^k \given w_{t - 1}^{1:K})
    \right)
    \log \hat Z_{\text{SMC}}(r(\epsilon_{1:T}^{1:K}), a_{1:T - 1}^{1:K}) + \nabla_{\theta, \phi} \log \hat Z_{\text{SMC}}(r(\epsilon_{1:T}^{1:K}), a_{1:T - 1}^{1:K})
\right]$.

In Figure~\ref{fig:gradients/reinforce_vs_ignore}, we demonstrate that the estimator in \eqref{eq:gradients/reinforce-reparam} has much higher variance if we include the first term.
\begin{figure}[!htb]
  \centering
  \includegraphics[width=0.5\textwidth]{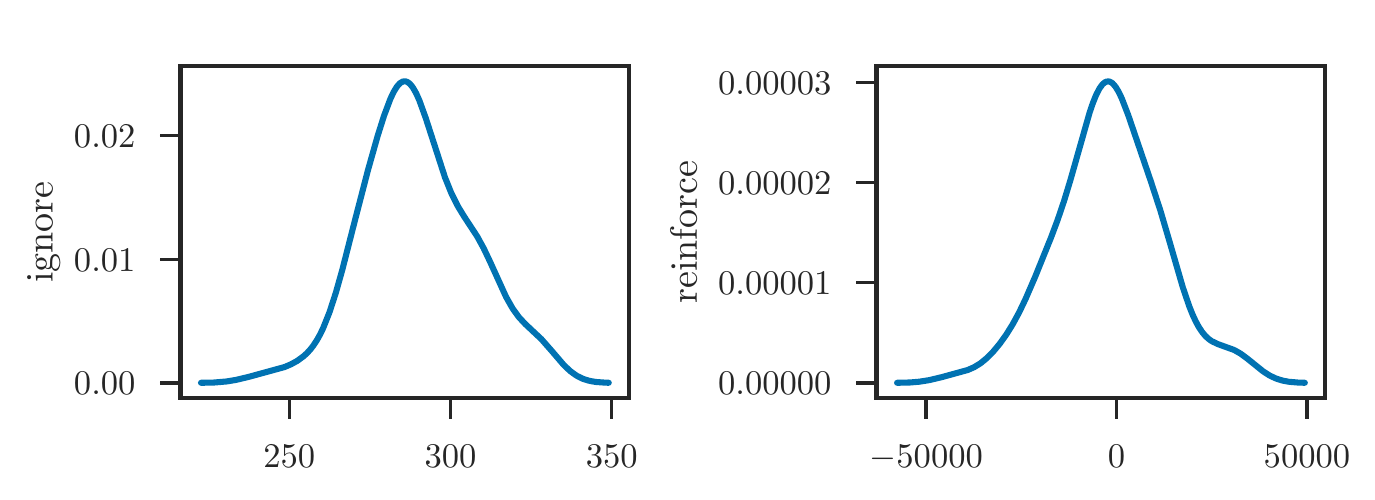}
  \caption{$T = 200$ model described in Section~\ref{sec:experiments/lgssm}. \Gls{KDE} of $\nabla_{\theta_1} \ELBO_{\text{SMC}}$ evaluated at $\theta_1 = 0.1$ with $K = 16$ using $100$ samples.}
  \label{fig:gradients/reinforce_vs_ignore}
\end{figure}

\section{Proofs for Bias \& Implications on the Proposals}
\label{sec:bias-proofs}

\begin{proof}[Derivation of~\eqref{eqn:aesmc/bias/elbo2}]
    \begin{align}
        \ELBO &= \int Q(x) \log \frac{Z_P P(x)}{Q(x)} \,\mathrm dx \\
        &= \int Q(x) \log Z_P \,\mathrm dx - \int Q(x) \log\frac{Q(x)}{P(x)} \,\mathrm dx \\
        &= \log Z_P - \KL{Q}{P}. \label{eqn:bias-proofs/basic_elbo}
    \end{align}
\end{proof}

\begin{proof}[Proof of Claim~\ref{claim:aesmc/bias/estimator}]
    Since $\hat Z_P(x) \geq 0$, $Q(x) \geq 0$ and $\int Q(x) \hat Z_P(x) \,\mathrm dx = Z_P$, we can let the unnormalized target density in Definition~\ref{claim:aesmc/bias/elbo} be $\tilde P(x) = Q(x) \hat Z_P(x)$.
    Hence, the normalized target density is $P(x) = Q(x) \hat Z_P(x) / Z_P$.
    Substituting these quantities into \eqref{eqn:aesmc/bias/elbo} and \eqref{eqn:aesmc/bias/elbo2} yields the two equalities in \eqref{eqn:aesmc/bias/elbo3}.
\end{proof}

\begin{proof}[Proof of Proposition~\ref{proposition:aesmc/bias/elbo_is}]
    ($\implies$) Substituting for $Q_{\text{IS}}(x^{1:K}) = P_{\text{IS}}(x^{1:K})$, we obtain
    \begin{align}
        \prod_{k = 1}^K q(x^k \given y) &= \frac{1}{K} \sum_{k = 1}^K \frac{\prod_{\ell = 1}^K q(x^\ell \given y)}{q(x^k \given y)} p(x^k \given y) \\ &= \frac{1}{K} \sum_{k = 1}^K \left[q(x^1 \given y) \cdots q(x^{k - 1} \given y) p(x^k \given y) q(x^{k + 1} \given y) \cdots q(x^K \given y)\right].
    \end{align}
    Integrating both sides with respect to $(x^2, \dotsc, x^K)$ over the whole support (i.e. marginalizing out everything except $x^1$), we obtain:
    \begin{align}
        q(x^1 \given y) = \frac{1}{K} \left[p(x^1 \given y) + \sum_{k = 2}^K q(x^1 \given y)\right].
    \end{align}
    Rearranging gives us $q(x^1 \given y) = p(x^1 \given y)$ for all $x^1$.

    ($\impliedby$) Substituting $p(x^k \given y) = q(x^k \given y)$, we obtain
    \begin{align}
        P_{\text{IS}}(x^{1:K}) &= \frac{1}{K} \sum_{k = 1}^K \frac{Q_{\text{IS}}(x^{1:K})}{q(x^k \given y)} p(x^k \given y) \\
        &= \frac{1}{K} \sum_{k = 1}^K Q_{\text{IS}}(x^{1:K}) \\
        &= Q_{\text{IS}}(x^{1:K}).
    \end{align}
\end{proof}

\begin{proof}[Proof of Proposition~\ref{proposition:aesmc/bias/elbo_smc}]
    We consider the general sequence of target distributions $\pi_t(x_{1:t})$ ($p_{\theta}(x_{1:t} \given y_{1:t})$ in the case of \glspl{SSM}), their unnormalized versions $\gamma_t(x_{1:t})$ ($p_{\theta}(x_{1:t}, y_{1:t})$ in the case of \glspl{SSM}), their normalizing constants $Z_t = \int \gamma_t(x_{1:t}) \,\mathrm dx_{1:t}$ ($p_{\theta}(y_{1:t})$ in the case of \glspl{SSM}), where $Z = Z_T = p(y_{1:T})$.

    ($\implies$)
    It suffices to show that $\hat Z_{\text{SMC}}(x_{1:T}^{1:K}, a_{1:T - 1}^{1:K}) = Z$ for all $(x_{1:T}^{1:K}, a_{1:T - 1}^{1:K})$ implies 1 and 2 in Proposal~\ref{proposition:aesmc/bias/elbo_smc} due to equation~\eqref{eqn:aesmc/bias/P}.

    We first prove that $\hat Z_{\text{SMC}}(x_{1:T}^{1:K}, a_{1:T - 1}^{1:K}) = Z$ for all $(x_{1:T}^{1:K}, a_{1:T - 1}^{1:K})$ implies that the weights
    \begin{align}
        w_1(x_1) &:= \frac{\gamma_1(x_1)}{q_1(x_1)} \label{eqn:weight_1} \\
        w_t(x_{1:t}) &:= \frac{\gamma_t(x_{1:t})}{\gamma_{t - 1}(x_{1:t - 1}) q_t(x_t \given x_{1:t - 1})} && \text{ for } t = 2, \dotsc, T \label{eqn:weight_t}
    \end{align}
    are constant with respect to $x_{1:t}$.

    Pick $t \in \{1, \dotsc, T\}$ and distinct $k, \ell \in \{1, \dotsc, K\}$.
    Also, pick $x_{1:t}$ and ${x'}_{1:t}$.
    Now, consider two sets of particle sets $(\bar x_{1:T}^{1:K}, \bar a_{1:T - 1}^{1:K})$ and $(\tilde x_{1:T}^{1:K}, \tilde a_{1:T - 1}^{1:K})$, illustrated in Figure~\ref{fig:bias-proofs/particle_sets}, such that
    \begin{align}
        \bar x_\tau^\kappa &=
        \begin{cases}
            {x'}_\tau & \text{ if } \kappa = \ell \text{ and } \tau < t \\
            {x'}_\tau & \text{ if } (\kappa, \tau) = (k, t) \\
            x_\tau & \text{ if } \kappa = k \text{ and } \tau < t \\
            x_\tau^\kappa & \text{ otherwise} 
        \end{cases} && \text{ for } \tau = 1, \dotsc, T, ~\kappa = 1, \dotsc, K, \\
        \bar a_\tau^\kappa &=
        \begin{cases}
            \ell & \text{ if } (\kappa, \tau) = (k, t - 1) \text{ or } (k, t) \\
            \kappa & \text{ otherwise}
        \end{cases} && \text{ for } \tau = 1, \dotsc, T - 1, ~\kappa = 1, \dotsc, K, \\
        \tilde x_\tau^\kappa &=
        \begin{cases}
            {x'}_\tau & \text{ if } \kappa = \ell \text{ and } \tau < t \\
            x_\tau & \text{ if } (\kappa, \tau) = (k, t) \\
            x_\tau & \text{ if } \kappa = k \text{ and } \tau < t \\
            x_\tau^\kappa & \text{ otherwise} 
        \end{cases} && \text{ for } \tau = 1, \dotsc, T, ~\kappa = 1, \dotsc, K, \\
        \tilde a_\tau^\kappa &=
        \begin{cases}
            \ell & \text{ if } (\kappa, \tau) = (k, t) \\
            \kappa & \text{ otherwise}
        \end{cases} && \text{ for } \tau = 1, \dotsc, T - 1, ~\kappa = 1, \dotsc, K.
    \end{align}
    \begin{figure}[!htb]
    \centering
    \begin{subfigure}{.4\textwidth}
      \centering
      \includegraphics[width=\textwidth]{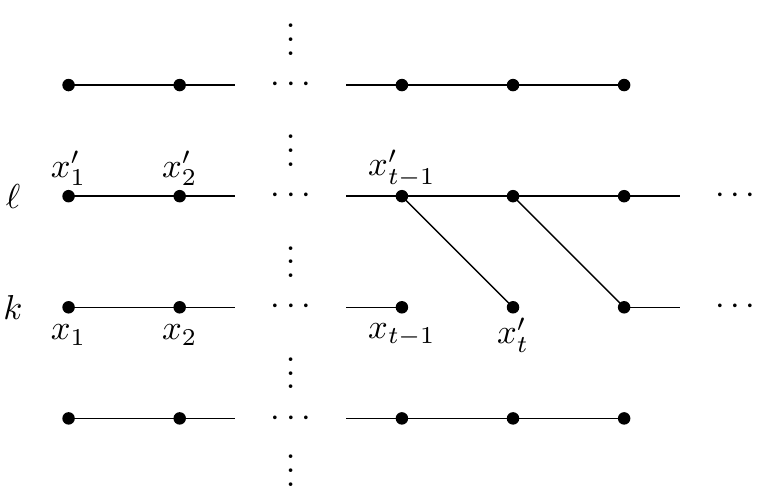}
    \end{subfigure}\hfill
    \begin{subfigure}{.4\textwidth}
      \centering
      \includegraphics[width=\textwidth]{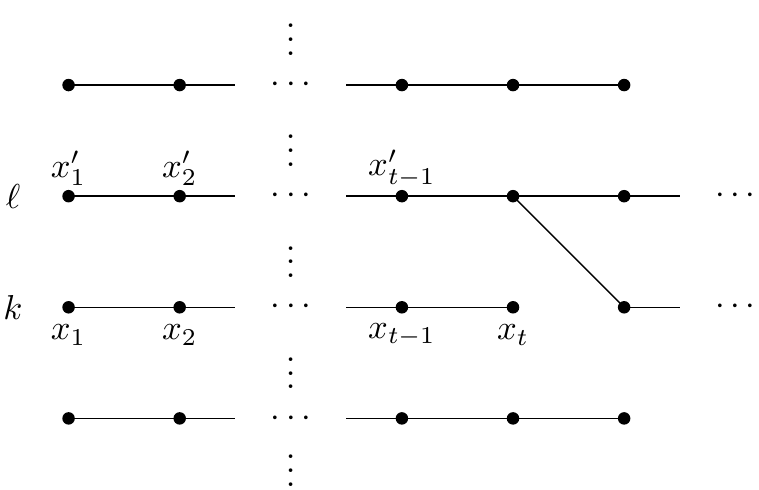}
    \end{subfigure}
    \caption{(Left) particle set $(\bar x_{1:T}^{1:K}, \bar a_{1:T - 1}^{1:K})$ and (right) particle set $(\tilde x_{1:T}^{1:K}, \tilde a_{1:T - 1}^{1:K})$. Lines indicate ancestor indices.}
    \label{fig:bias-proofs/particle_sets}
    \end{figure}

    The weights $\bar w_\tau^\kappa$ and $\tilde w_\tau^\kappa$ for the respective particle sets are identical except when $(\tau, \kappa) = (t, k)$ where
    \begin{align}
        \bar w_t^k = w_t({x'}_{1:t}), \\
        \tilde w_t^k = w_t(x_{1:t}).
    \end{align}
    Since $\hat Z(\bar x_{1:T}^{1:K}, \bar a_{1:T - 1}^{1:K}) = \hat Z(\tilde x_{1:T}^{1:K}, \tilde a_{1:T - 1}^{1:K})$, we have $w_t({x'}_{1:t}) = w_t(x_{1:t})$.  As this holds for any arbitrary $t$ and $x_{1:t}$,
    it follows that $w_t(x_{1:t})$ must be constant with respect to $x_{1:t}$ for all $t = 1, \dotsc, T$.

    Now, for $x_{1:t}$, consider the implied proposal by rearranging \eqref{eqn:weight_1} and \eqref{eqn:weight_t}
    \begin{align}
        q_1(x_1) &= \frac{\gamma_1(x_1)}{w_1} \\
        q_t(x_t \given x_{1:t - 1}) &= \frac{\gamma_t(x_{1:t})}{\gamma_{t - 1}(x_{1:t - 1}) w_t} && \text{ for } t = 2, \dotsc, T,
    \end{align}
    where $w_t := w_t(x_{1:t})$ is constant from our previous results. For this to be a normalized density with respect to $x_t$, we must have
    \begin{align}
        w_1 = \int \gamma_1(x_1) \,\mathrm dx_1 &= Z_1,
    \end{align}
    and for $t = 2, \dotsc, T$:
    \begin{align}
        w_t &= \int \frac{\gamma_t(x_{1:t})}{\gamma_{t - 1}(x_{1:t - 1})} \,\mathrm dx_t \\
        &= \frac{\int \gamma_t(x_{1:t}) \,\mathrm dx_t}{\gamma_{t - 1}(x_{1:t - 1})} \\
        &= \frac{Z_t}{Z_{t - 1}} \cdot \frac{\int \pi_t(x_{1:t}) \,\mathrm dx_t}{\pi_{t - 1}(x_{1:t - 1})}.
    \end{align}
    Since $\int \pi_{t + 1}(x_{1:t + 1}) \,\mathrm dx_{t + 1}$ and $\pi_t(x_{1:t})$ are both normalized densities, we must have $\pi_t(x_{1:t}) = \int \pi_{t + 1}(x_{1:t + 1}) \,\mathrm dx_{t + 1}$ for all $t = 1, \dotsc, T - 1$ for all $x_{1:t}$. For a given $t \in \{1, \dotsc, T - 1\}$ and $x_{1:t}$, applying this repeatedly yields
    \begin{align}
        \pi_t(x_{1:t}) = \int \pi_{t + 1}(x_{1:t + 1}) \,\mathrm dx_{t + 1}
        = \int \int \pi_{t + 2}(x_{1:t + 2}) \,\mathrm dx_{t + 2} \,\mathrm dx_{t + 1}
        = \cdots 
        = \int \pi_T(x_{1:T}) \,\mathrm dx_{t + 1:T}
    \end{align}
    such that each $\pi_t(x_{1:t})$ must be the corresponding marginal of the final target.
    We also have
    \begin{align}
        w_1(x_1) &= Z_1, \\
        w_t(x_{1:t}) &= \frac{Z_t}{Z_{t - 1}}, && t = 2, \dotsc, T, \\
        q_1(x_1) &= \pi_1(x_1) = \pi_T(x_1), \\
        q_t(x_t \given x_{1:t - 1}) &= \frac{\pi_t(x_{1:t})}{\pi_{t - 1}(x_{1:t - 1})} = \frac{\pi_T(x_{1:t})}{\pi_T(x_{1:t - 1})}, && t = 2, \dotsc, T.
    \end{align}

    ($\impliedby$)
    To complete the proof, we now simply substitute identities in 1 and 2 of Proposal~\ref{proposition:aesmc/bias/elbo_smc} back to the expression of $\hat Z(x_{1:T}^{1:K}, a_{1:T - 1}^{1:K})$ to obtain $\hat Z(x_{1:T}^{1:K}, a_{1:T - 1}^{1:K}) = Z$.
\end{proof}

\section{Experiments}
\subsection{VRNN}
\label{sec:appendix_vrnn}
In the following we give the details of our \gls{VRNN} architecture.
The generative model is given by:

\begin{equation}
  p(x_{1:T}, h_{0:T}, y_{1:T}) = p(h_0) \prod_t p(x_t|h_{t-1})p(y_t|h_{t-1}, x_t)p(h_t|h_{t-1},x_t,y_t)
\end{equation}

where
\begin{equation}
\begin{split}
p(h_0) & = \mathrm{Normal}(h_0; 0, I)\\
p(x_t|h_{t-1}) & = \mathrm{Normal}(x_t;\mu^x_\theta(h_{t-1}), {\sigma^x_\theta}(h_{t-1})^2)\\
p(y_t|h_{t-1}, x_t) & = \mathrm{Bernoulli}(y_t;\mu^y_\theta(\varphi^x_\theta(x_t), h_{t-1}))\\
p(h_t|h_{t-1},x_t,y_t) & = \delta_{f(h_{t-1},\varphi^x_\theta(x_t),\varphi^y_\theta(y_t))}(h_t)
\end{split}
\end{equation}

and the proposal distribution is given by

\begin{equation}
p(x_t|y_t, h_{t-1}) = \mathrm{Normal}(x_t;\mu^p_\phi(\varphi^y_\phi(y_t), h_{t-1}), {\sigma^p_\phi}^2(\varphi^y_\phi(y_t), h_{t-1}))
\end{equation}

The functions $\mu^x_\theta$ and $\sigma^x_\theta$ are computed by networks with two fully connected layers of size 128 whose first layer is shared.
$\varphi^x_\theta$ is one fully connected layer of size 128.

For visual input, the encoding $\varphi^y_\theta$ is a convolutional network with conv-4x4-2-1-32, conv-4x4-2-1-64, conv-4x4-2-1-128 where conv-wxh-s-p-n denotes a convolutional network with $n$ filters of size $w\times h$, stride $s$, padding $p$. Between convolutions we use leaky ReLUs with slope 0.2 as nonlinearity and batch norms. The decoding $\mu^y_\theta$ uses transposed convolutions of the same dimensions but in reversed order, however with stride $s=1$ and padding $p=0$ for the first layer.

A Gated Recurrent Unit (GRU) is used as RNN and if not stated otherwise ReLUs are used in between fully connected layers.

For the proposal distribution, the functions $\mu^p_\phi$ and $\sigma^p_\phi$ are neural networks with three fully connected layers of size 128 that are sharing the first two layers.
Sigmoid and softplus functions are used where values in $(0,1)$ or $\mathbb{R}^{+}$ are required.
We use a minibatch size of 25.

For the moving agents dataset we use \gls{ADAM} with a learning rate of $10^{-3}$.

A specific feature of the \gls{VRNN} architecture is that the proposal and the generative model share the component $\varphi^y_{\phi,\theta}$.
Consequently, we set $\phi=\theta$ for the parameters belonging to this module and train it using gradients for both $\theta$ and $\phi$.
\subsection{Moving Agents}
\label{sec:appendix_moving_agents}

In Figure~\ref{fig:experiments/max/visualisation} we investigate the quality of the generative model by comparing visual predictions. We do so for models learned by \gls{IWAE} \emph{(top)} and \gls{AESMC} \emph{(bottom)}. The models were learned using ten particles but for easier visualization we only predict using five particles.

The first row in each graphic shows the ground truth. The second row shows the averaged predictions of all five particles. The next five rows show the predictions made by each particle individually.

The observations (i.e. the top row) up to $t=19$ are shown to the model. Up to this timestep the latent values $x_{0:19}$ are drawn from the proposal distribution $q(x_t|y_t, h_{t-1})$.
From $t=20$ onwards the latent values $x_{20:37}$ are drawn from the generative model $p(x_t|x_{t-1})$.
Consequently, the model predicts the partially occluded, stochastic movement over 17 timesteps into the future.

We note that most particles predict a viable future trajectory.
However, the model learned by \gls{IWAE} is not as consistent in the quality of its predictions, often 'forgetting' the particle.
This does not happen in every predicted sequence but the behavior shown here is very typical.
Models learned by \gls{AESMC} are much more consistent in the quality of their predictions.
\begin{figure}[!htb]
\centering
\begin{subfigure}{\textwidth}
  \centering
  \includegraphics[width=.95\linewidth]{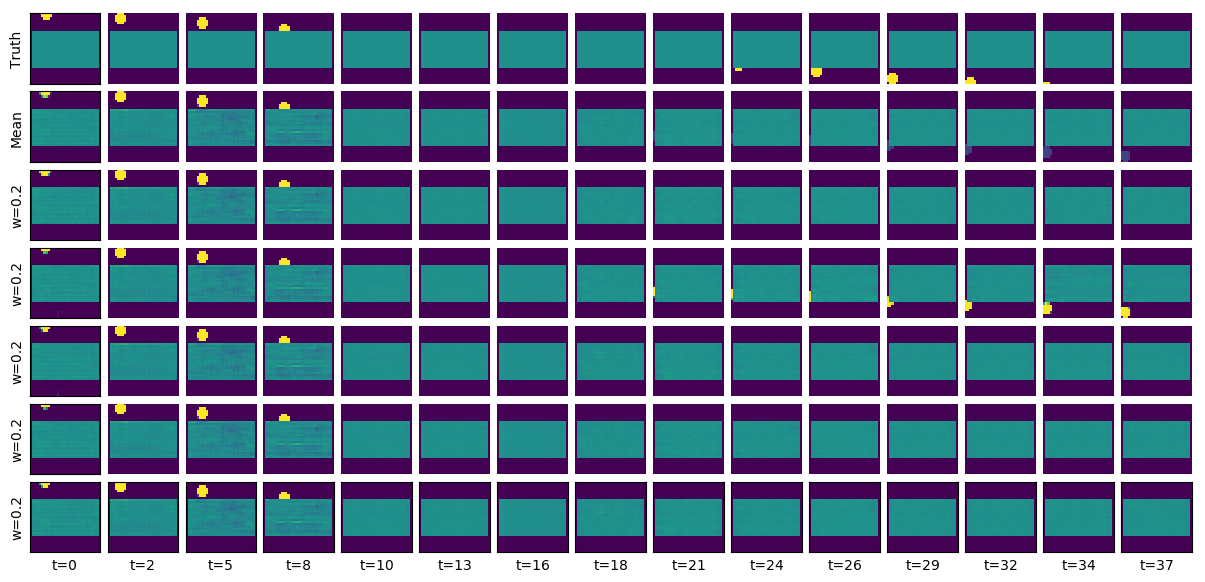}
\end{subfigure}
\begin{subfigure}{\textwidth}
  \centering
  \includegraphics[width=.95\linewidth]{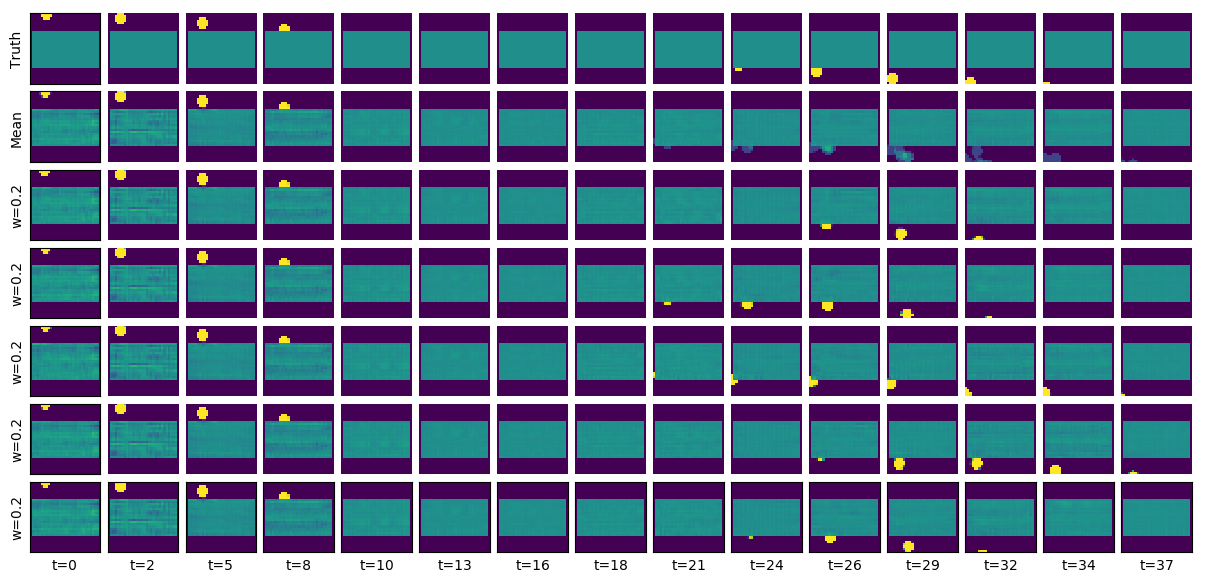}
\end{subfigure}
\caption{Visualisation of the learned model. Ground truth observations (top row in each sub figure) are only revealed to the algorithm up until t=19 inclusive. The second row shows the prediction averaged over all particles, all following rows show the prediction made by a single particle. \emph{(Top)} \gls{IWAE}. \emph{(Bottom)} \gls{AESMC}.}
\label{fig:experiments/max/visualisation}
\end{figure}

\subsection{Optimizing Only Proposal Parameters}
\label{sec:additional-experiments}
\begin{figure}[htb!]
  \centering
  \begin{subfigure}[!t]{.49\textwidth}
    \centering
    \includegraphics[width=\textwidth]{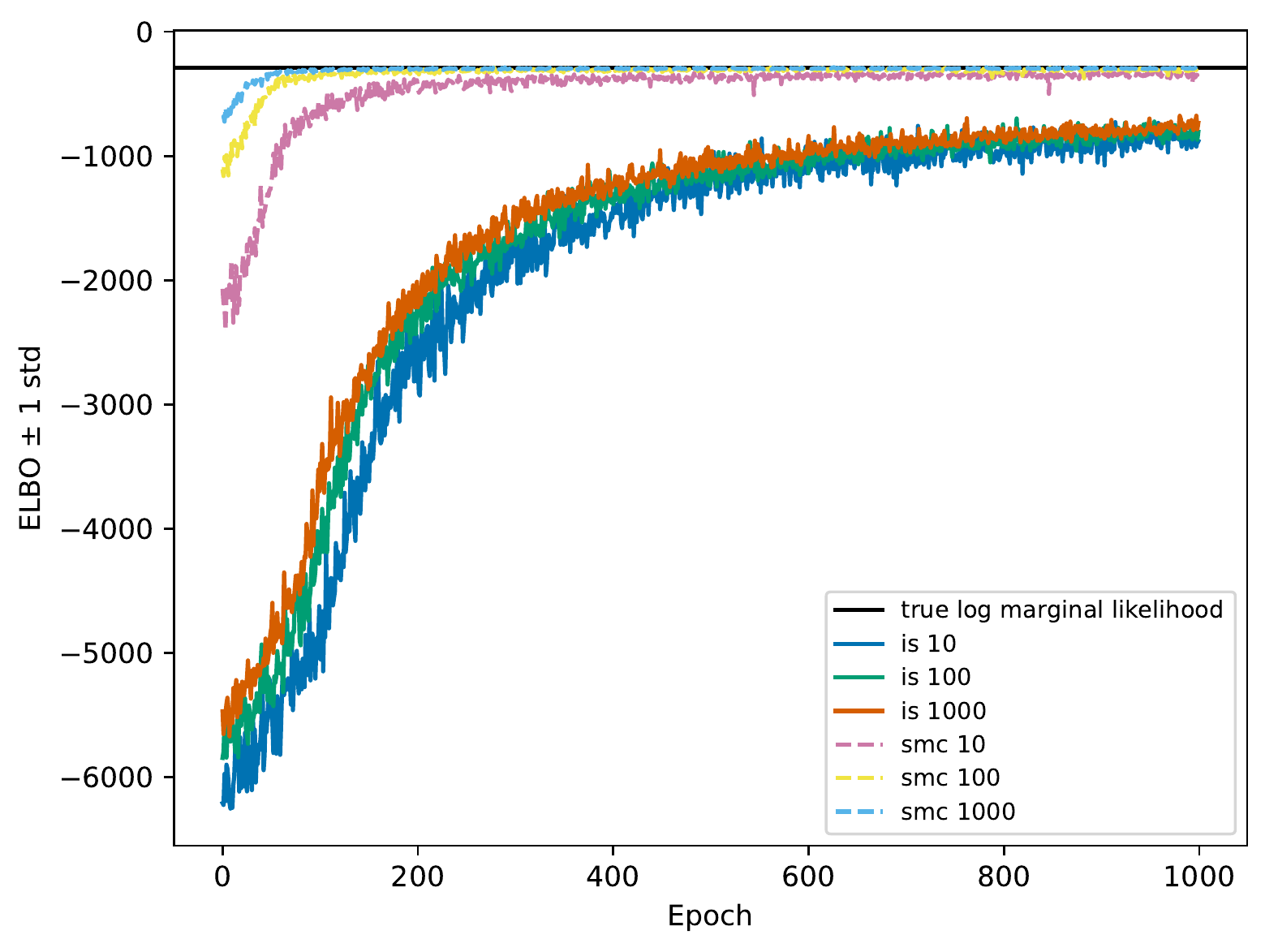}
  \end{subfigure}
  \begin{subfigure}[!t]{.49\textwidth}
    \centering
    \includegraphics[width=\textwidth]{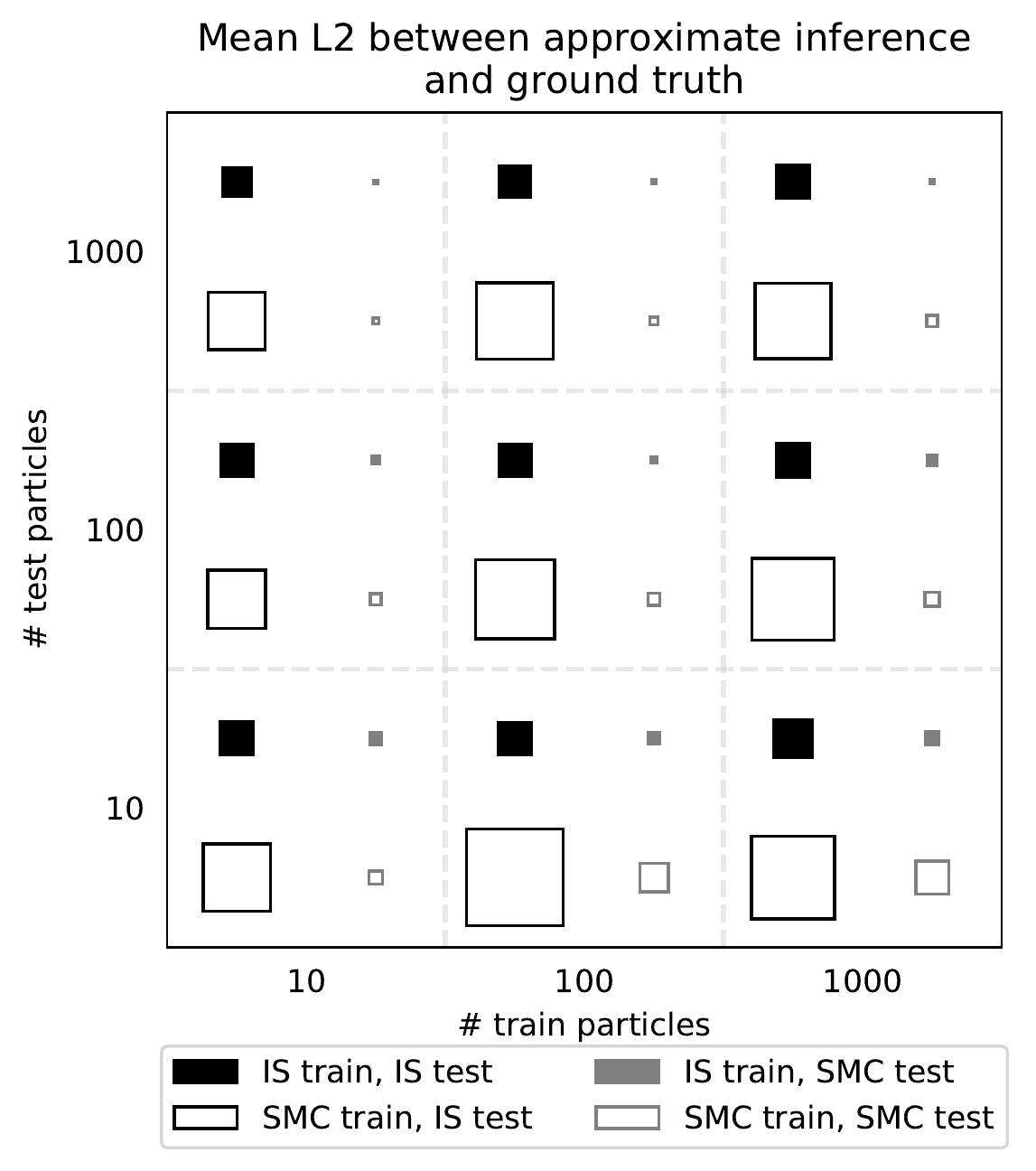}
  \end{subfigure}
  \caption{\emph{(Left)} Optimizing $\ELBO$ with respect to $\phi$ for \gls{LGSSM}. \emph{(Right)} The lengths of the squares are proportional (with a constant factor) to $\sqrt{\sum_{t = 1}^T (\mu_t^{\text{kalman}} - \mu_t^{\text{approx}})^2}$ which is a proxy for inference quality of $\phi$ described in the main text. The larger the square, the worse the inference.}
  \label{fig:additional_experiments/lgssm/inference_test_and_elbos}
\end{figure}

We have run experiments where we optimize various $\ELBO$ objectives with respect to $\phi$ with $\theta$ fixed in order to see how various objectives have an effect on proposal learning.
In particular, we train $\ELBO_{\text{IS}}$ and $\ELBO_{\text{SMC}}$ with number of particles $K \in \{10, 100, 1000\}$.
Once the training is done, we use the trained proposal network to perform inference using both \gls{IS} and \gls{SMC} with number of particles $K_{\text{test}} \in \{10, 100, 1000\}$.

In Figure~\ref{fig:additional_experiments/lgssm/inference_test_and_elbos}, we see experimental results for the \gls{LGSSM} described in Section~\ref{sec:experiments/lgssm}.
We measure the quality of the inference network using a proxy $\sqrt{\sum_{t = 1}^T (\mu_t^{\text{kalman}} - \mu_t^{\text{approx}})^2}$ where $\mu_t^{\text{kalman}}$ is the true marginal mean $\E_{p(x_{1:T} \given y_{1:T})}[x_t]$ obtained from the Kalman smoothing algorithm and $\mu_t^{\text{approx}} = \left(\sum_{k = 1}^K w_T^k x_t\right) / \left(\sum_{k = 1}^K w_T^k\right) $ is an approximate marginal mean obtained from the proposal parameterized by $\phi$.

We see that if we train using $\ELBO_{\text{SMC}}$ with $K_{\text{train}} = 1000$, the performance for inference using \gls{SMC} (with whichever $K_{\text{test}} \in \{10, 100, 1000\}$) is worse than if we train with $\ELBO_{\text{IS}}$ with any number of particles $K_{\text{train}} \in \{10, 100, 1000\}$.
Examining the other axes of variation:
\begin{itemize}
  \item Increasing $K_{\text{test}}$ (moving up in Figure~\ref{fig:additional_experiments/lgssm/inference_test_and_elbos}~(Right)) improves inference.
  \item Increasing $K_{\text{train}}$ (moving to the right in Figure~\ref{fig:additional_experiments/lgssm/inference_test_and_elbos}~(Right)) worsens inference.
  \item Among different possible combinations of (training algorithm, testing algorithm), (\gls{IS}, \gls{SMC}) $\succ$ (\gls{SMC}, \gls{SMC}) $\succ$ (\gls{IS}, \gls{IS}) $\succ$ (\gls{SMC}, \gls{IS}), where we use ``$a \succ b$'' to denote that the combination $a$ results in better inference than combination $b$.
\end{itemize}
\end{document}